\newtheorem{theorem}{Theorem}
\DeclareMathOperator*{\argmin}{arg\,min}
\algrenewcommand\algorithmicrequire{\textbf{Input:}}
\algrenewcommand\algorithmicensure{\textbf{Output:}}
\newcommand{\groups}{\mathcal{G}}
\newcommand{\hypo}{\mathcal{H}}
\newcommand{\dist}{\mathcal{G}}
\newcommand{\dists}{\mathcal{G}}
\newcommand{\suppk}{supp(D_g)}
\newcommand{\suppg}{supp(g)}
\newcommand{\samp}{\mathcal{S}}
\newcommand{\hypop}{\mathcal{H}'}
\newtheorem{lemma}{Lemma}
\newtheorem{definition}{Definition}
\theoremstyle{definition}
\newtheorem{example}{Example}
\newcommandx{\Nick}[2][1=]{\todo[linecolor=OliveGreen,backgroundcolor=OliveGreen!25,bordercolor=OliveGreen,#1]{#2}}
\newcommandx{\Kamalika}[2][1=]{\todo[linecolor=Blue,backgroundcolor=Blue!25,bordercolor=Blue,#1]{#2}}
\newcommandx{\thiswillnotshow}[2][1=]{\todo[disable,#1]{#2}}
\title{Agnostic Multi-Group Active Learning}
\author{%
Nick Rittler \\
  University of California - San Diego\\
  \texttt{nrittler@ucsd.edu} \\
  \And
Kamalika Chaudhuri \\
  University of California - San Diego\\
  \texttt{kamalika@cs.ucsd.edu} \\
}
\begin{document}

\maketitle

\begin{abstract}
Inspired by the problem of improving classification accuracy on rare or hard subsets of a population, there has been recent interest in models of learning where the goal is to generalize to a collection of distributions, each representing a ``group''. We consider a variant of this problem from the perspective of active learning, where the learner is endowed with the power to decide which examples are labeled from each distribution in the collection, and the goal is to minimize the number of label queries while maintaining PAC-learning guarantees. Our main challenge is that standard active learning techniques such as disagreement-based active learning do not directly apply to the multi-group learning objective. We modify existing algorithms to provide a consistent active learning algorithm for an agnostic formulation of multi-group learning, which given a collection of $G$ distributions and a hypothesis class $\mathcal{H}$ with VC-dimension $d$, outputs an $\epsilon$-optimal hypothesis using $\tilde{O}\left( (\nu^2/\epsilon^2+1) G d  \theta_{\groups}^2 \log^2(1/\epsilon) + G\log(1/\epsilon)/\epsilon^2 \right)$ label queries, where $\theta_{\groups}$ is the worst-case disagreement coefficient over the collection. Roughly speaking, this guarantee improves upon the label complexity of standard multi-group learning in regimes where disagreement-based active learning algorithms may be expected to succeed, and the number of groups is not too large. We also consider the special case where each distribution in the collection is individually realizable with respect to $\mathcal{H}$, and demonstrate $\tilde{O}\left( G d \theta_{\groups} \log(1/\epsilon) \right)$ label queries are sufficient for learning in this case. We further give an approximation result for the full agnostic case inspired by the group realizable strategy. 
\end{abstract}

\section{Introduction}
There is a growing theory literature concerned with choosing a classifier that performs well on multiple subpopulations or ``groups''  \cite{blum2017, nguyen2018, rothblum2021, tosh2021, abernethy2022, bottou2022, tosh2021, haghtalab2023}. In many cases, the motivation comes from a perspective of fairness, where a typical requirement is that we classify with similar accuracy across groups \cite{rothblum2021, tosh2021, abernethy2022}. In other cases, the motivation may simply be to train more reliable classifiers. For example, cancer detection models with good overall accuracy often suffer from poor ability to detect rare subtypes of cancer that are not well-represented or identified in training. This suggests that naive ERM may be insufficient in practice \cite{rayner2019}. 

In this work, we consider the following formulation of ``multi-group'' learning. The learner is given a collection of distributions $\groups = \{D_g \}_{g=1}^G$, each corresponding to a group, and a hypothesis class $\hypo$, and wants to pick a classifier that approximately minimizes the maximum classification error over group distributions.  We consider this problem from an active learning perspective, where the learner has the power to choose which examples from each group it wants to label during training. In a standard extension of the active learning literature, we set out to design schemes for choosing which examples from each group should be labeled, where the goal is to minimize the number of label queries while retaining PAC-learning guarantees. 

A major challenge in harnessing the power of active algorithms even in standard agnostic settings is making sure they are consistent. In the case of active learning, this means that as the number of number of labels requested approaches infinity, the learner outputs an optimal hypothesis. To complicate things further, the main algorithmic paradigm for consistent agnostic active learning over a single distribution - disagreement based active learning (DBAL) - fails to admit direct application in the multi-group learning objective. The fundamental idea in DBAL is that the learner may safely spend its labeling budget in the ``disagreement region'', a subset of instance space where empirically well-performing hypotheses disagree about how new examples should be labeled. When the learner need only consider a single distribution, error differences between classifiers are specified entirely through their performance on the disagreement region, and so spending the labeling budget here allows the learner to figure out which hypotheses are best while saving on labels. The problem is that when multiple group distributions must be considered, the absolute errors of classifiers on each group must be estimated to compare performance of two classifiers in their worst case over collection, and this property no longer holds. 

We resolve this via the observation that, while we cannot spend all our labeling budget in the disagreement region, we can exploit the agreement in its complement to cheaply estimate absolute errors of classifiers on each group. In particular, we estimate the absolute errors by choosing a representative classifier $h_{\hypop}$ in the set of empirically well-performing classifiers $\hypop$, and estimating its error on the complement of the disagreement region on each group distribution. These error estimates can be used to construct estimates for the absolute errors on each group for each $h\in \hypop$ at the statistical cost of estimating a coin bias, leading to a relatively cheap, consistent active strategy.

We analyze the number of label queries made by this scheme in terms of a standard complexity measure in the active learning literature called the ``disagreement coefficient'' \cite{hanneke2007, dasgupta2007}, and show an upper bound of $\tilde{O}\left( (\nu^2/\epsilon^2) G d  \theta_{\groups}^2 \log^2(1/\epsilon) + G\log(1/\epsilon)/\epsilon^2 \right)$, where $\theta_{\groups}$ is the maximal disagreement coefficient over the collection of group distributions. We discuss some regimes where this label complexity can be expected to push below sample complexity lower bounds for a learner that can request samples from each group distribution during training, but does not have power to abstain from labeling specific examples. 

 We also consider the special case of agnostic learning where each group distribution is individually realizable, but no single hypothesis separates all groups simultaneously. In this case, we show that all dependence on $1/\epsilon^2$ in the label complexity can be replaced with $\log(1/\epsilon)$ when disagreement coefficients are bounded. It turns out that using the strategy we develop in this special case leads to an approximation algorithm for the general agnostic case, for which we give guarantees.

\section{Related Work}

\subsection{Multi-Group Learning}
The majority of the empirical work on multi-group learning has been through the lens of  ``Group-Distributionally Robust Optimization'' (G-DRO)  \cite{hu2018, oren2019, sagawa2019}. The goal in G-DRO is to choose a classifier that minimizes the maximal risk against an unknown mixture over a collection of distributions $\{D_g\}_{g=1}^G$ representing groups.  One assumes a completely passive sampling setting -  all data is given to the learner at the beginning of training, and the learner has no ability to draw extra, fine-grained samples. The strategy is usually empirical risk minimization (ERM) - or some regularized variant - on the empirical max loss over groups; for a set of classifiers parameterized by $\phi \in \Phi$, letting $S_g$ denote the set of examples in the training set coming from $D_g$, one performs $\min_{\phi \in \Phi} \max_{g \in [G]} \frac{1}{|S_g|} \sum_{(x_i, y_i) \in S_g} l(f_{\phi}(x_i), y_i)$ for some loss $l$. It is important to note that the learner knows the group identity of each sample in the training set, but is not provided with group information at test time, precluding the possibility of training a separate classifiers for each group.

``Multi-group PAC learning'' consider the multi-group problem under the passive sampling assumption from a more classical learning-theoretic perspective  \cite{rothblum2021, tosh2021}. Here, one assumes there is a single distribution $D$ from which one is given samples, but also a collection of subsets of instance space $\mathcal{G}$ over which one wants to learn conditional distributions. Given a hypothesis class $\hypo$, the learner tries to improperly learn a classifier $f$ that competes with the optimal hypothesis on each conditional distribution specified by a group $g$ in the collection - formally, one requires that for a given error tolerance $\epsilon$, $f$ has the property $\forall g \in \mathcal{G}, \ \ \mathbb{P}_{D}(f(x) \ne y | x \in g) \leq \inf_{h \in \hypo} \mathbb{P}_{D}(h(x) \ne y | x \in g) + \epsilon$ with high probability. An interesting wrinkle in this literature is that the group identity of samples is available at both training and test times. It has been shown that a sample complexity of $\tilde{O}\left(\log(|\hypo|)/\gamma \epsilon^2\right)$ is sufficient for agnostic learning in this model, where $\gamma$ is the minimal mass of a group $g$ under $D$ \cite{tosh2021}.  

``Collaborative learning'' studies the multi-group problem under an alternative sampling model  \cite{blum2017, nguyen2018, haghtalab2023}. In this case, we are given a collection of distributions $\{D_g \}_{g =1}^{G}$, each corresponding to a group. Given some hypothesis class $\hypo$, the goal is to learn a classifier $f$, possibly improperly, that is evaluated against its worst-case loss over $D_1, \dots , D_G$; formally, we would like $f$ to satisfy $
\max_{g \in [G]} \mathbb{P}_{D_g} (f(x) \ne y) \leq \inf_{h \in \hypo} \max_{g \in [G]} \mathbb{P}_{D_g} (h(x) \ne y) + \epsilon$.
In contrast with multi-group PAC learning, the learner may decide how many samples from each $D_g$ it wants to collect during training, and group identity is hidden on test examples. This models the case where a learner may want to collect more data from a particularly difficult group of instances, such as a rare or hard-to-diagnose type of cancer. It has been shown for finite hypothesis classes that $\tilde{\Theta}( \log(|\hypo|)/\epsilon^2 + G/\epsilon^2)$ total samples over all groups are necessary and sufficient to learn in this model; $\tilde{O}( d \log(1/\epsilon)/\epsilon^2 + G/\epsilon^2)$ total samples are sufficient for VC-classes \cite{haghtalab2023}. 

Our work extends the model of collaborative learning, and endows the learner with the ability decide which samples from each group distribution $D_g$ should be labeled. This is the standard framework of active learning, applied to the multi-group setting. As in collaborative learning, we assume group identity is hidden at test time.

\begin{table}[t]
\caption{Overview of the complexity of multi-group learning. The $\tilde{O}$ notation hide factors logarithmic in $d$, $G$, $1/\delta$, and $\log(1/\epsilon)$. We reserve discussion of regimes in which our algorithm improves 
on results in Collaborative Learning for Section 5.}
\label{tab}
\centerline{
\begin{tabular}{SSS} \toprule
    {Problem} & {Full Agnostic} & {Group-Realizable} \\ [1mm] \midrule
    {Passive Multi-Group \cite{tosh2021}}  & {$\tilde{O}\left( \log( |\hypo|)/\gamma \epsilon^2 \right)$} & {$\tilde{O}\left( \log(|\hypo|)/ \gamma \epsilon \right)$} \\ [5mm]
    {Collaborative Learning \cite{haghtalab2023}} & {$\tilde{O}\left( d \log(1/\epsilon)/\epsilon^2 +  G/\epsilon^2 \right)$} & {?} \\ [5mm]
    {Active Multi-Group  (us)} & {$\tilde{O}\left( (\nu^2/\epsilon^2+1) G d  \theta_{\groups}^2 \log^2(1/\epsilon) + G\log(1/\epsilon)/\epsilon^2 \right)$}  & {$\tilde{O}\left( G d \theta_{\groups} \log^2(1/\epsilon) \right)$}  \\ \bottomrule
\end{tabular}}
\end{table}

\subsection{Active Learning}
Active learning concerns itself with the development of learning algorithms for training classifiers that have power over which training examples should be labeled \cite{settles2009, dasgupta2011}.
The field has largely focused on uncovering settings in which algorithmic approaches reduce the amount of labels required for PAC-style learning guarantees beyond sample complexity lower bounds that apply to i.i.d. data collection from the underlying distribution \cite{hanneke2014, balcan2016}. In the agnostic, 0-1 loss setting, the standard upper bounds for label complexity follow $\tilde{O}\left( \theta (d \log(1/\epsilon) + d\nu^2/\epsilon^2 \right)$. Here, $\nu$ is the ``noise rate'', i.e. the true error of the optimal hypothesis $h^*$, and $\theta$ is a distribution-dependent parameter called the ``disagreement coefficient''. Thus, gains of active strategies over standard passive lower bounds of $\Omega(d\nu/\epsilon^2)$ depend on certain easiness conditions like small noise rates and bounded disagreement coefficients \cite{dasgupta2011}.

The vast majority of the work on active learning has been done in the 0-1 loss setting  \cite{balcan2006, hanneke2007, dasgupta2007, castro2008, minsker2012, zhang2014}. It has been significantly harder to push the design of active learning algorithms past the regime of accuracy on a fixed distribution. While some work has attempted to generalize classical ideas of active learning to different losses \cite{beygelzimer2008}, these are heavily outnumbered in the literature. 

As previously mentioned, the most difficult part of designing agnostic active learning strategies is maintaining consistency. The issue comes down to a phenomenon referred to as ``sampling bias'' : because active learners would like to target certain parts of space to save on labels, there is a risk that the learner prematurely stops sampling on a part of space in which there is some detail in the distribution that could not be detected at a higher labeling resolution. This can easily lead to inconsistent strategies \cite{dasgupta2011}. Thus, a major contribution of our work is exhibiting a consistent active scheme for the multi-group problem.

\section{Preliminaries}

\subsection{Learning Problem}
We study a binary classification setting where examples fall in some instance space $\mathcal{X}$, and labels lie in $\mathcal{Y} := \{-1, 1\}$. We suppose we are given some pre-specified, finite collection of distributions $\groups = \{D_g \}_{g=1}^G$ over $\mathcal{X} \times \mathcal{Y}$ corresponding to groups. Given a hypothesis class $\hypo$ of measurable classifiers with VC-dimension $d$, the goal of the leaner is to pick some $h \in \hypo$ from finite data that performs well across all the distributions in $\dists$ in the worst case. Let $L_{\dists}(h \mid g) := \mathbb{P}_{D_g} \left( h(x) \ne y \right)$
be the error of a hypothesis $h$ on group $g$. Formally speaking, the learner would like to choose a classifier approximately obtaining
\begin{equation*}
\inf_{h \in \hypo} \max_{g \in [G]} L_{\dists}(h \mid g),
\end{equation*}
using finite data. We often use $L_{\dist}^{\max}(h)$ as shorthand for $ \max_{g \in [G]} L_{\dists}(h \mid g)$. We use $\nu := \inf_{h \in \mathcal{H}} L_{\dist}^{\max}(h)$ to denote the ``noise rate'' of $\hypo$ on the multi-distribution objective.  The use of the term ``agnostic'' throughout reflects the fact that we make no assumption that $\nu = 0$ in our algorithm design or analysis. We assume for simplicity that there is some $h^* \in \hypo$ attaining $\nu$.

\subsection{Active Learning Model}
We consider a standard active learning model specified as follows. Let $\suppg$ denote the support of the marginal over instance space of $D_g$. The active learner has access to two sampling oracles for each distribution specified by $D_g$. The first is $U_g( \cdot )$, which given a set $S \subseteq \mathcal{X}$ measurable with respect to $D_{g}$,  returns an unlabeled sample from $D_g$ conditioned on $S$; if $\mathbb{P}_{D_g}(x \in S) = 0$, then $U_g(S)$ returns ``$\textrm{None}$''. The second is $O_g(\cdot)$, which given a point in $\suppg$, returns a sample from the conditional distribution over labels specified by $x$ and $g$. More formally, querying $U_g(S)$ for $S$ such that $\mathbb{P}_{D_g}(x \in S)  \ne 0$ is equivalent to drawing i.i.d. samples according to marginal over instance space of $D_g$ (independent of previous randomness), and returning the first example that falls in $S$; querying the oracle $O_g(x)$ for $x \in \suppg$ is equivalent to receiving a sample from a Rademacher random variable with parameter $\mathbb{P}_{D_g}(Y=1 | X = x)$. 

As is standard in active learning, the active learner is assumed to have functionally unlimited access to queries from $U_g(\cdot)$. On the other hand, queries to oracles $O_g(\cdot)$ are precious: the ``label complexity'' of a strategy executed by the active learner is the sum of queries to oracles $O_g(\cdot)$ over all $g$, and is to be minimized given a desired generalization error guarantee. 

\section{Challenges in Multi-Group Active Learning}
In this section, we give some background on classical disagreement-based methods on a single distribution, and discuss in more detail the challenge of designing consistent active learning strategies in the multi-group setting.

 \subsection{Background on Disagreement-Based Active Learning}
Almost all agnostic active learning algorithms for accuracy over a single distribution boil down to disagreement-based methods \cite{balcan2006, dasgupta2007, dasgupta2011, hanneke2014}. The fundamental idea in this school of algorithms is that one can learn the relative accuracy of two classifiers $h$ and $h'$ by only requesting labels for examples in the part of instance space on which they disagree about how examples should be labeled. More generally, given a set of classifiers $\hypop \subseteq \hypo$, one can consider the ``disagreement region'' of $\hypop$, defined as 
\begin{equation*}
 \Delta(\hypop) := \left \{ x \in \mathcal{X}: \exists h, h' \in \hypop \  s.t. \ h(x) \ne h'(x) \right \}.
\end{equation*}
As alluded to above, the difference in accuracy of classifiers $h, h' \in \hypop$ is specified entirely through this inherently label-independent notion. For a single distribution $D$, we may write
\begin{equation*}
\frac{\mathbb{P}_{D} \left( h(x) \ne y \right) - \mathbb{P}_{D} \left( h'(x) \ne y \right)}{\mathbb{P}_{D} \left( \Delta(\hypop) \right)} = \mathbb{P}_{D} \big( h(x) \ne y \mid \Delta(\hypop) \big)  -  \mathbb{P}_{D} \left( h'(x) \ne y \mid \Delta(\hypop) \right), 
\end{equation*}
as by definition, $h, h'$ have the same conditional loss on $\Delta(\hypop)^c$. Inspired by this observation, the idea is to label examples in $\Delta(\hypop)$, and ignore those outside of it. This allows the learner 
to learn about the relative performance of classifiers while saving on the labels of examples falling in $\Delta(\hypop)^c$.

In running a DBAL algorithm, one hopes certain classifiers quickly reveal themselves to be empirically so much worse on $\Delta(\hypop)$ than the current ERM hypothesis, that by standard concentration bounds, they can be inferred to be worse than $\epsilon$-optimal on $D$ with high probability. Elimination of these classifiers shrinks the disagreement region, allowing the labeling to become further fine-grained. Given the above loss decomposition, this leads to consistent active learning strategies.

\subsection{Labeling in the Disagreement Region: No Longer Enough}
In the multi-group setting, the strategy of comparing performance of classifiers solely on $\Delta(\hypop)$ breaks down. Although the classifiers in $\hypop$ still agree in
$\Delta(\hypop)^c$, this is not enough infer differences in the worst case error over groups $L_{\dist}^{\max}$; this is because differences in performance on $\Delta(\hypop)$ are not generally representative of differences in absolute errors over group distributions. The following simple example makes this concrete. 

\begin{example}
Consider the task of determining which of two classifiers $h$ and $h'$ has lower worst case error over distributions $D_1$ and $D_2$ with marginal supports $S_1 \subseteq \mathcal{X}$ and $S_2 \subseteq \mathcal{X}$. Let their disagreement region be denoted by $\Delta = \{ x \in \mathcal{X} : h(x) \ne h'(x) \}$, and let $l(f, i, A)$ denote the conditional loss of classifier $f$ on $S_i \cap A$ under $D_i$.  Suppose we only know their conditional losses on $\Delta \cap S_1$ and  $\Delta  \cap S_2$ under $D_1$ and $D_2$, respectively. We see for $h$ that 
\begin{equation*}
l(h, i, S) =
\begin{cases} 
1/4 &  i=1,  A= \Delta \cap S_1 \\
1/3 &  i=2,  A= \Delta \cap S_2 \\
? &   i=1,  A= \Delta^c \cap S_1 \\
? &  i=2,  A= \Delta^c \cap S_2 \\
\end{cases}
\end{equation*}
and for $h'$ that 
\begin{equation*}
l(h', i, S) =
\begin{cases} 
34/100 &  i=1,  A= \Delta \cap S_1 \\
0 &  i=2,  A= \Delta \cap S_2 \\
? &   i=1,  A= \Delta^c \cap S_1 \\
? &  i=2,  A= \Delta^c \cap S_2 \\
\end{cases}.
\end{equation*}
Consider ignoring the performance of classifiers in $\Delta^c$, and using as a surrogate for the multi-group objective 
\begin{equation*}
 \max_{i \in \{1, 2\}} l(h, i, S_i \cap \Delta). 
\end{equation*}
In this case, we would chose $h$ has the better of the two hypotheses. 

Suppose now that $\Delta \cap S_1$ and $\Delta \cap S_2$ have mass $1/2$ under both $D_1$ and $D_2$, and that $l(h, 1, \Delta^c \cap S_1)= l(h', 1, \Delta^c \cap S_1) = 0$. Finally, suppose that $l(h, 2, \Delta^c \cap S_2)= l(h', 2, \Delta^c \cap S_2) = 1/2$. Then under the true multi-group objective, by decomposing the group losses, one can compute that $h'$ has a lower worst case error over groups $D_1$ and $D_2$ by a margin of $1/6$.
\end{example}

Thus, to utilize the disagreement region in multi-group algorithms, we will need to at least label some samples on $\Delta(\hypop)^c$ as $\hypop$ shrinks. The specification of such a strategy is the content of the next section.

\section{General Agnostic Multi-Group Learning}
\subsection{An Agnostic Algorithm}
The basic idea in Algorithm \ref{alg:agnostic} is similar to classical DBAL approaches for a single distribution. We start with the full hypothesis class $\hypo$, and look to iteratively eliminate hypotheses from contention as we learn about how to classify on each group through targeted labeling. 

Our solution to the problem posed to DBAL above is to keep track of the errors of well-performing hypotheses on the complement of the disagreement region in a way that exploits the agreement property. To do this, we construct a two-part estimate for the loss of a hypothesis on a given group. Denote the set of hypotheses still in contention at iteration $i$ is $\hypo_i$.  Let  $R_i = \Delta(\hypo_i)$ and $S_{R_i, g}$ be a labeled sample from $U(R_i)$ and $S_{R_i^c, g}$ be a labeled sample from $U(R_i^c)$. We can now estimate the loss for some $h \in \hypo_i$ on group $g$ via
\begin{equation*}
L_{S; R_i}(h \mid g) :=  \mathbb{P}_{D_G}(x \in R_i) \cdot L_{S_{R_i, g}}(h) +  \mathbb{P}_{D_G}(x \in R^c_i) \cdot L_{S_{R_i^c, g}}(h_{\hypo_i}), 
\end{equation*}
where $L_{S}(h) := 1/|\samp| \sum_{(x, y) \in \samp} \mathbbm{1}[h(x) \ne y]$ is a standard empirical loss estimate\footnote{taken to be an arbitrary constant if $\samp= \emptyset$; see the Appendix for details.}, and  $h_{\hypo_i}$ is an \textit{arbitrarily} chosen hypothesis from $\hypo_i$ that is used in the loss estimate of every $h \in \hypo_i$. This leads to an unbiased estimator given that every $h \in \hypo_i$ labels the sample from this part of space in exactly the same way.

The utility of this estimator is that by choosing an arbitrary representative $h_{\hypo_i}$, we can estimate the loss of all hypotheses still in contention to precision $O(\epsilon)$ on $R_i^c$ with $\tilde{O}(1/\epsilon^2)$ samples, removing the usual dependence of the VC-dimension. On the other hand, as the disagreement region shrinks, $\mathbb{P}_{D_G}(x \in R_i)$ shrinks as well, so while we will still need to invoke uniform convergence to get reliable loss estimates in $R_i$, the precision to which we need to estimate losses in this part of space decreases with every iteration, and eventually the overall dependence on the VC-dimension is diminished. This later observation is the standard source of gains in DBAL \cite{balcan2006, hanneke2007, zhang2015}. 

After forming these loss estimates on each group, we construct unbiased loss estimates for the worst case over groups via 
\begin{equation*}
L^{\max}_{\samp; R_i}(h) := \max_{g \in \groups} L_{S; R_i}(h \mid g).
\end{equation*}
These loss estimates inherit concentration properties from the two-part estimator above. 
We draw enough samples at each iteration $i$ such that we essentially learn the multi-group problem to precision $2^{\lceil \log(1/\epsilon) \rceil - i}\epsilon$.

We note that Algorithm \ref{alg:agnostic} assumes access to the underlying group marginals measures $\mathbb{P}_{D_G}$.  This is common in the active learning literature  \cite{balcan2006, minsker2012}. Probabilities of events in instance space can be estimated to arbitrary accuracy using only unlabeled data, so this assumption is not dangerous to our goal of lowering label complexities. 

\begin{algorithm}[t]
  \caption{General Agnostic Algorithm} \label{alg:agnostic}
  \begin{algorithmic}[1]
    \Procedure{\texttt{multi\_group\_agnostic}}{$\mathcal{H}, \epsilon, \delta, \{U_g(\cdot)\}_{g=1}^G, \{O_g(\cdot)\}_{g=1}^G$}
    
    \State $\hypo_1 \gets \hypo$, $I \gets \lceil \log(1/\epsilon) \rceil$ 
    \For{$i \in [I]$}
    	\State $R_i \gets \Delta(\hypo_i)$
	\State $m_i \gets  \max_{g' \in [G]} \mathbb{P}_{D_{g'}}\left(x \in  \Delta(\hypo_i) \right)$
    	\For{$g \in [G]$}
		\State $\samp'_{R_i, g} \gets 1024 \left( \frac{m_i}{\epsilon 2^{I - i}} \right)^2 \big(2d \log(64/\epsilon) + \ln(8 G  \lceil \log(1/\epsilon) \rceil /\delta) \big)$ i.i.d. samples  \\
			\hspace{105mm} from $U_g(R_i)$ 
		\State $\samp'_{R_i^c, g} \gets \frac{128 \ln(4 G   \lceil \log(1/\epsilon) \rceil/\delta)}{(\epsilon 2^{I - i})^2 }$ i.i.d. samples from $U_g(R_i^c)$
		
		\vspace{2mm}
		\If{``\textrm{None}'' $\in \samp'_{R_i, g}$} \Comment{$\mathbb{P}_{D_g}(x \in R_i) = 0$ in this case}
			\State $\samp_{R_i, g} \gets \emptyset$
		\Else
			\State $\samp_{R_i, g} \gets \{(x, O_g(x)) : x \in \samp'_{R_i, g}\}$
		\EndIf
		\vspace{2mm}
		\If{``\textrm{None}'' $\in \samp'_{R_i^c, g}$}
			\State $\samp_{R_i^c, g} \gets \emptyset$
		\Else
			\State $\samp_{R_i^c, g} \gets \{(x, O_g(x)) : x \in \samp'_{R_i^c, g}\}$
		\EndIf
		\vspace{2mm}
	\EndFor
	\vspace{1mm}
	\State $\hat{h}_i = \argmin_{h \in \hypo_i } L^{\max}_{\samp; R_i}(h)$
	\State $\hypo_{i+1} \gets \left \{  h \in \hypo_i : L^{\max}_{\samp; R_i}(h) \leq  L^{\max} _{\samp; R_i}(\hat{h}_i) + 2^{I - i} \epsilon/4   \right \}$	
    \EndFor

    \State \textbf{return} $\hat{h} = \argmin_{h \in \hypo_{I+1}} L^{\max}_{S; R_{I+1}}(h)$
    \EndProcedure
  \end{algorithmic}
\end{algorithm}

\subsection{Guarantees}
Vitally, the scheme given in Algorithm \ref{alg:agnostic} is consistent. It is a lemma of ours that the number of samples drawn at each iteration is sufficiently large that the true error of any $h\in \hypo_{i+1}$ is no more than  $2^{\lceil \log(1/\epsilon) \rceil - i}\epsilon$. Thus, after $\lceil \log(1/\epsilon) \rceil$ iterations, 
the ERM hypothesis on $L^{\max}_{\samp; R_i}(\cdot)$ is then $\epsilon$-optimal with high probability.

We can bound the label complexity of the algorithm using standard techniques from DBAL. A ubiquitous quantity in the analysis of disagreement-based schemes
is that of the ``disagreement coefficient'' \cite{hanneke2007, hanneke2014survey}. The general idea is that the disagreement coefficient bounds the rate of decrease in $r$ of the measure of the the disagreement region of a ball of radius $r$ around $h^*$ in the pseudo-metric $\rho_{g}(h, h') := \mathbb{P}_{D_g} \left(h(x) \ne h'(x) \right)$. Precisely, we use the following definition of the disagreement coefficient in our analysis \cite{dasgupta2007, zhang2015}:  given a group $D_g$,  the disagreement coefficient on $g$ is 
\begin{equation*}
\theta_{g} := \sup_{h \in \mathcal{H}} \sup_{r' \geq 2\nu + \epsilon} \frac{\mathbb{P}_{D_g} \left( x \in \Delta(B_g(h, r')) \right)}{r'}, 
\end{equation*}
where $B_g(h, r') := \{h' \in \hypo: \rho_{g}(h, h') \leq r' \}$ is a ball of radius $r'$ about $h$ in pseudo-metric $\rho_g$.  We further notate the maximum disagreement coefficient over the groups $\dists$ as $\theta_{\dists} := \max_{g}  \theta_{g}$. 

The disagreement coefficient is trivially bounded above by $1/\epsilon$, but can be bounded independently of $\epsilon$ in many cases \cite{dasgupta2007, hanneke2014survey}. For example, when $\hypo$ is the class of linear separators in $d$ dimensions and the underlying marginal distribution is uniform over the Euclidean unit sphere,  the disagreement coefficient is $\Theta(\sqrt{d})$  \cite{hanneke2007}. 

\begin{theorem}\label{thm: agnostic}
For all $\epsilon >0$, $\delta \in (0,1)$, collections of groups $\dists$, and hypothesis classes $\hypo$ with $d < \infty$, with probability $\geq 1-\delta$, the output $\hat{h}$ of Algorithm \ref{alg:agnostic} satisfies 
\begin{equation*}
L_{\dist}^{\max}(\hat{h}) \leq L_{\dists}^{\max}(h^*) + \epsilon, 
\end{equation*}
and its label complexity is bounded by
\begin{equation*}
\tilde{O}\Bigg( G \ \theta_{\dists}^2  \bigg(\frac{\nu^2}{\epsilon^2} + 1 \bigg)  \big( d \log(1/\epsilon)  + \log(1/\delta) \big)  \log(1/\epsilon)  + \frac{ G \log(1/\delta) \log(1/\epsilon)}{\epsilon^2} \Bigg).
\end{equation*}
\end{theorem}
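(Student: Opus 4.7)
The plan is to follow the classical disagreement-based active learning template, adapting each step to handle the two-part loss estimator $L_{\samp; R_i}(h \mid g)$. I first define a high-probability ``good event'' on which, for every iteration $i \in [I]$, every $h \in \hypoi$, and every $g$, the estimate is within $O(\epsilon 2^{I-i})$ of the true loss $L_{\dists}(h \mid g)$. The estimator splits into a contribution from the disagreement region $R_i$ and one from its complement. On $R_i$, I would apply a VC-type uniform convergence bound over $\hypoi$ to the conditional loss under $D_g \mid R_i$, then rescale by $\mathbb{P}_{D_g}(x \in R_i) \leq m_i$; the sample size $|\samp'_{R_i, g}|$ is chosen precisely so that the resulting unconditional deviation is $O(\epsilon 2^{I-i})$. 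On $R_i^c$, I only need to estimate the single Bernoulli parameter for $h_{\hypoi}$, for which a Hoeffding bound with $\tilde{O}(1/(\epsilon 2^{I-i})^2)$ samples suffices; the agreement property means this single estimate serves every $h \in \hypoi$ on $R_i^c$. Union-bounding over groups and iterations gives the good event with probability $\geq 1 - \delta$, and the ``None'' branches of the algorithm are handled by noting that they correspond to events of zero mass, so setting the contribution to zero introduces no error.

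On the good event I argue inductively that $h^* \in \hypoi$ for every $i$: since $L^{\max}_{\samp; R_i}(h^*)$ and $L^{\max}_{\samp; R_i}(\hat{h}_i)$ are each within $O(\epsilon 2^{I-i})$ of the corresponding true worst-case losses, the elimination threshold $2^{I-i}\epsilon/4$ is (by the choice of constants in the sample sizes) large enough to preserve $h^*$, while any surviving $h \in \hypo_{i+1}$ must satisfy $L_{\dist}^{\max}(h) \leq \nu + O(\epsilon 2^{I-i})$. After $I = \lceil \log(1/\epsilon) \rceil$ rounds this slack collapses to $\epsilon$, so the final ERM hypothesis on $L^{\max}_{\samp; R_{I+1}}(\cdot)$ is $\epsilon$-optimal, establishing the correctness claim.

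For the label complexity I invoke the disagreement coefficient. Since every $h \in \hypoi$ satisfies $L_{\dist}^{\max}(h) \leq \nu + O(\epsilon 2^{I-i})$, a standard triangle-inequality argument in $\rho_g$ gives $\rho_g(h, h^*) \leq 2\nu + O(\epsilon 2^{I-i})$ for each $g$, so $\hypoi \subseteq B_g(h^*, r_i)$ with $r_i = 2\nu + O(\epsilon 2^{I-i})$. Because $r_i \geq 2\nu + \epsilon$, the disagreement-coefficient bound applies and gives $\mathbb{P}_{D_g}(x \in R_i) \leq \theta_g r_i$, hence $m_i \leq \theta_{\dists}(2\nu + O(\epsilon 2^{I-i}))$. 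Substituting into the sample sizes drawn at iteration $i$, the VC-type term becomes $G \cdot \tilde{O}(\theta_{\dists}^2 (\nu^2/(\epsilon 2^{I-i})^2 + 1)(d\log(1/\epsilon) + \log(1/\delta)))$, and summing the geometric series $\sum_i 1/(\epsilon 2^{I-i})^2 = O(1/\epsilon^2)$ together with the trivial sum $\sum_i 1 = O(\log(1/\epsilon))$ yields the first term of the claimed bound. The coin-bias samples contribute $G \cdot \tilde{O}(\log(1/\delta)/\epsilon^2)$ in total after the same geometric sum, matching the second term.

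The main obstacle is the concentration step: the two-part estimator combines a VC-uniform bound on $R_i$ (whose deviation depends on the random quantity $m_i$) with a coin-bias estimate on $R_i^c$ for the arbitrary representative $h_{\hypoi}$, and one must check these combine cleanly to give per-hypothesis precision $O(\epsilon 2^{I-i})$ while correctly threading through the constants so that $h^*$ is never eliminated. Once the inductive guarantee $h^* \in \hypoi$ and $m_i \leq \theta_{\dists}(2\nu + O(\epsilon 2^{I-i}))$ is in place, the remaining label-complexity algebra is a standard geometric sum.
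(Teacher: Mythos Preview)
Your proposal is correct and follows essentially the same approach as the paper: a two-part concentration bound (VC on $R_i$, Hoeffding on $R_i^c$) union-bounded over groups and rounds, an inductive argument that $h^*$ survives and that survivors have shrinking excess risk, and then the disagreement-coefficient bound $m_i \leq \theta_{\dists}(2\nu + O(\epsilon 2^{I-i}))$ summed over rounds. The only minor deviation is that you sum $\sum_i (m_i/(\epsilon 2^{I-i}))^2$ as a geometric series rather than bounding each term uniformly by $O(\theta_{\dists}^2(\nu/\epsilon+1)^2)$ as the paper does; your route is slightly sharper but leads to the same stated bound.
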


Here, the $\tilde{O}$ notation hides factors of $\log(\log(1/\epsilon))$ and $\log(G)$; we leave all proofs for the Appendix.

Theorem 1 tell us that Algorithm \ref{alg:agnostic} enjoys the following upside over passive and collaborative learning approaches: the dependence on the standard interaction of the VC-dimension $d$ and $1/\epsilon^2$ is removed, and replaced with  $G d \theta_{\groups}^2 \log^2(1/\epsilon) \nu^2/\epsilon^2$, which in settings with small disagreement coefficients and low noise rates, will be significant for small $\epsilon$.

\subsection{Comparison to Lower Bounds in Collaborative Learning}
We compare our label complexity guarantees to results in collaborative learning, where the learner has the power to ask for samples from specific group distributions, but not selectively label these samples. This is a strictly more demanding comparison than to
pure passive settings, but a fair one, given that active learners have the option of executing any collaborative learning strategy. In collaborative learning, for finite hypothesis classes $\mathcal{H}$, it is known that
\begin{equation*}
\Omega \left( \frac{\log(|\hypo|)}{\epsilon^2}  +  \frac{G \log(\min(|\hypo|, G)/\delta)}{\epsilon^2} \right)
\end{equation*}
total labels over all groups are necessary \cite{haghtalab2023}. We consider comparing this lower bound to a simplified version of the label complexity guarantee in Theorem \ref{thm: agnostic}:
$$
\tilde{O} \left( d G \theta_{\groups}^2 \log^2(1/\epsilon) + \frac{G\log(1/\epsilon)}{\epsilon^2} \right), 
$$
thus implicitly assuming $\nu$ is neglectably small, and making all comparisons up to factors logarithmic in $G$, $1/\delta$ and $\log(1/\epsilon)$. This former assumption is a standard assumption under which we may hope an agnostic active learner to succeed \cite{dasgupta2011}. 

Even the simplified upper bound does not admit the cleanest comparison this to lower bound, due to our excess factor of $\log(1/\epsilon)$ in the second term. However, it does showcase that while we pay slightly more per group than necessary, under conditions amenable to active learning, we pay significantly less per dimension $d$. Particularly for small $\epsilon$, one can see that's approximately sufficient that $G < o(\theta_{\groups}^2 (\log(1/\epsilon) \epsilon)^2)^{-1})$ for the simplified upper bound to beat the lower bound. 

For a more fine-grained comparison that in some sense underestimates the power of Algorithm \ref{alg:agnostic}, assume that the following condition governs the relationship of $G$, $d$, and $\epsilon$:
\begin{equation*}
 G \log(1/\epsilon) \leq d < \left( \theta_{\groups}^2 \epsilon^2 \log^2(1/\epsilon) \right)^{-1}.
\end{equation*}
Then the simplified bound is smaller in order than the lower bound above.

\section{Group Realizable Learning}
A special case of the learning problem, where extreme active learning gains can be readily seen, comes when the hypothesis class $\hypo$ achieves zero noise rate on each group $D_g$. This setting has been considered in the passive ``multi-group learning'' literature \cite{tosh2021}. Formally speaking, in the group realizable setting, the following condition holds:
\begin{equation*}
\forall g \in [G], \exists h_g^* \in \hypo \ s.t. \ L_{\dists}(h^*_g \mid g)=0,
\end{equation*}
i.e. for all groups in the collection $\dists$, there is some hypothesis achieving 0 error on that group. Note that this differs from the fully realizable setting where there is some $h^*\in \hypo$ with $L^{\max}_{\dists}(h^*) = 0$.  While fully realizable implies group realizable, the converse is not true. Thus, group realizability represents an intermediate regime between the realizable setting and the full agnostic settings. 

\subsection{Algorithm}
In the group realizable case, it is possible to show a reduction of the problem of active learning over hypothesis classes with respect to a single distribution.

This can be accomplished as follows. For each $D_g$, we call as a subroutine an active learner that is guaranteed to find an order $\epsilon$-optimal hypothesis $\hat{h}^*_g$ with high probability over it's queries. It then gathers new unlabeled samples from each $D_g$, and instead of requesting labels from $O_g(\cdot)$, labels each unlabeled point with $\hat{h}^*_g$. The final step is to do an empirical risk minimization on these artificially labeled samples with respect to the multi-group objective. See Algorithm \ref{alg: group realizable} for a formal specification of the strategy. 

\begin{algorithm}[t]
\caption{Group Realizable Algorithm}\label{alg: group realizable}
\begin{algorithmic}
\Procedure{\texttt{group\_realizable}}{$\mathcal{H}, \epsilon, \delta$, active learner $\mathcal{A}$, $\{ U_g(\cdot) \}_{g=1}^G, \{ O_g(\cdot) \}_{g=1}^
G)$}
\For{ $ g  \in [G]$ } 
	\State $\hat{h}_g  \gets \mathcal{A}(\mathcal{H}, \epsilon/6, \delta/2G, U_g(\mathcal{X}), O_g)$ 
	\State $S_{g}' \gets 144/\epsilon^2 \left(2d \ln(24/\epsilon) + \ln(8G/\delta) \right)$ samples from oracle $U_g(\mathcal{X})$ 
	\State $\hat{S}_{g} \gets \big\{ \big(x, \hat{h}_g(x)\big) : x \in S_{g}' \big \}$
\EndFor
\State \textbf{return} $ \hat{h} = \argmin_{h \in \mathcal{H}} \max_{g \in [G]} \frac{1}{|\hat{S}_{g}|} \sum_{(x, \hat{y})  \in \hat{S}_{g} }  \mathbbm{1}\left[ h(x)\ne \hat{y} \right]$
\EndProcedure
\end{algorithmic}
\end{algorithm}

\subsection{Guarantees}
The strategy given in Algorithm \ref{alg: group realizable} leads to a consistent active learning scheme, provided the active learners called as subroutines have standard guarantees that can be inherited.

Theorem \ref{thm: group realizable} gives a guarantee to this end. The proof follows from an argument similar to one used in \cite{dasgupta2007} - because the subroutine calls return hypotheses with near 0 error on each group, the artificially labeled training set used in the ERM step looks nearly identical to a counterfactual training set for the ERM step constructed by querying labels $O_g(x)$ for each unlabeled $x$. This is similar to the idea in \cite{zhang2015}. We present Theorem \ref{thm: group realizable} assuming access to a classical, realizable active learner due to \cite{cohn1994}. 

\begin{theorem}\label{thm: group realizable}
Suppose Algorithm \ref{alg: group realizable} is run with the active learner $\mathcal{A}_{CAL}$ of \cite{cohn1994}. Then for all $\epsilon > 0$, $\delta \in (0,1)$, hypothesis classes $\hypo$ with $d< \infty$, and collections of groups $\groups$ with the group realizability property under $\hypo$, with probability $\geq 1-\delta$, the output $\hat{h}$ satisfies
\begin{equation*}
L_{\dists}^{\max}(\hat{h}) \leq L_{\dists}^{\max}(h^*) + \epsilon, 
\end{equation*}
and the number of labels requested is 
\begin{equation*}
\tilde{O}\bigg( d G  \theta_{\dists} \log(1/\epsilon) \bigg).
\end{equation*}
\end{theorem}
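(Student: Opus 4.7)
The plan is to show that since each $\hat{h}_g$ from the CAL subroutine is very accurate on $D_g$, the artificially labeled samples $\hat{S}_g$ behave (for the purposes of ERM) like a labeled sample from $D_g$, so standard VC-uniform convergence over $\mathcal{H}$ on each $\hat{S}_g$ delivers the multi-group guarantee. The label complexity then arises only from the CAL calls, since the step that labels $S_g'$ with $\hat{h}_g$ issues no oracle queries.

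First I would invoke the guarantee of $\mathcal{A}_{CAL}$ of \cite{cohn1994} on each group $g$. Since $\hypo$ is group-realizable, the subroutine call $\mathcal{A}(\hypo, \epsilon/6, \delta/2G, U_g, O_g)$ returns $\hat{h}_g$ with $L_{\dists}(\hat{h}_g \mid g) \leq \epsilon/6$ with probability $\geq 1 - \delta/2G$, using $\tilde{O}(d\theta_g \log(1/\epsilon))$ labels. A union bound over $g \in [G]$ yields that with probability $\geq 1 - \delta/2$ this bound holds simultaneously for all $g$, for a total of $\tilde{O}(dG\theta_{\dists}\log(1/\epsilon))$ labels queried.

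Next I would connect the artificial loss $L_{\hat{S}_g}(h) = \frac{1}{|\hat{S}_g|}\sum_{(x,\hat{y})\in\hat{S}_g}\mathbbm{1}[h(x)\neq\hat{y}]$ to the true loss $L_{\dists}(h\mid g)$. Define the population surrogate $\tilde{L}_g(h) := \mathbb{P}_{D_g}(h(x) \neq \hat{h}_g(x))$. By the triangle inequality for the 0-1 loss (applied pointwise using $\mathbbm{1}[h(x)\neq\hat{h}_g(x)] \leq \mathbbm{1}[h(x)\neq y] + \mathbbm{1}[\hat{h}_g(x)\neq y]$ and the reverse), we have $|\tilde{L}_g(h) - L_{\dists}(h\mid g)| \leq L_{\dists}(\hat{h}_g\mid g) \leq \epsilon/6$ for every $h \in \hypo$, on the event from the previous step. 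Since $\hat{S}_g$ consists of i.i.d. draws from $D_g$ labeled deterministically by $\hat{h}_g$ (which is fixed conditional on the CAL subroutine's queries), a standard VC uniform convergence bound with the chosen sample size $144/\epsilon^2(2d\ln(24/\epsilon) + \ln(8G/\delta))$ gives $\sup_{h\in\hypo}|L_{\hat{S}_g}(h) - \tilde{L}_g(h)| \leq \epsilon/6$ with probability $\geq 1 - \delta/2G$ for each $g$. Another union bound over $g$ delivers this uniformly for all groups with probability $\geq 1 - \delta/2$.

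Combining the two events of total probability $\geq 1 - \delta$, we have $\sup_{h\in\hypo,g\in[G]}|L_{\hat{S}_g}(h) - L_{\dists}(h\mid g)| \leq \epsilon/3$. Since this error bound is uniform in $h$ and in $g$, it transfers to the max-over-groups functional, and the ERM step of Algorithm~\ref{alg: group realizable} gives
\begin{equation*}
L_{\dists}^{\max}(\hat{h}) \leq \max_{g}L_{\hat{S}_g}(\hat{h}) + \epsilon/3 \leq \max_g L_{\hat{S}_g}(h^*) + \epsilon/3 \leq L_{\dists}^{\max}(h^*) + 2\epsilon/3 \leq L_{\dists}^{\max}(h^*) + \epsilon,
\end{equation*}
as desired. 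The label complexity bound is exactly the $G$ CAL calls, giving $\tilde{O}(dG\theta_{\dists}\log(1/\epsilon))$.

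The main obstacle is the middle step: arguing cleanly that uniform convergence is valid on $\hat{S}_g$ even though its labels depend on the CAL output. The key observation is to condition on the event that $\hat{h}_g$ is $\epsilon/6$-good and on the randomness of $\hat{h}_g$; the unlabeled samples $S_g'$ are then drawn independently from $D_g$, so $(x, \hat{h}_g(x))$ are i.i.d. draws from a joint distribution with marginal $D_g$ on $x$, and VC uniform convergence applies directly. Once this conditioning is handled carefully, the rest is just careful bookkeeping of the triangle inequalities and the two $\delta/2$ halves of the failure probability.
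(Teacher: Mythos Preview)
Your proof is correct and follows the same high-level template as the paper (CAL guarantee $+$ VC uniform convergence $+$ ERM), but you route the triangle inequality differently. The paper introduces a \emph{counterfactual} labeled sample $S_g$ obtained by querying $O_g(x)$ on each $x\in S_g'$, applies VC uniform convergence to $S_g$ against the true risk, and then bounds $|L_{S_g}(h)-L_{\hat S_g}(h)|$ pointwise on the sample by $L_{S_g}(\hat h_g)$; this last quantity is controlled by applying uniform convergence again to $\hat h_g$. You instead work at the population level: define $\tilde L_g(h)=\mathbb{P}_{D_g}(h(x)\neq \hat h_g(x))$, bound $|\tilde L_g(h)-L_{\dists}(h\mid g)|$ directly by $L_{\dists}(\hat h_g\mid g)$ via the 0--1 triangle inequality, and then apply VC to the artificial distribution $(x,\hat h_g(x))$ after conditioning on $\hat h_g$.

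Your decomposition is slightly cleaner---it avoids the coupling device and uses the CAL accuracy bound only once, yielding the sharper slack $2\epsilon/3$ rather than the paper's $\epsilon$. The paper's counterfactual-sample argument, on the other hand, makes the independence structure completely explicit (VC is applied to a genuine i.i.d.\ sample from $D_g$ with true labels) and so sidesteps the need for your conditioning remark; that remark is correct as you state it, but is exactly the point a careful reader would want spelled out. Either route delivers the theorem.
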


Thus, when disagreement coefficients across the collection of groups are bounded independently of $\epsilon$, the usual, passive dependence on $1/\epsilon^2$ is replaced by $\log(1/\epsilon)$. 

In the passive multi-group setting of \cite{rothblum2021}, it has been shown that $\tilde{O}\left( \log(|\hypo|)/ \gamma \epsilon \right)$  samples are sufficient for group realizable learning, where we recall $\gamma$ is a lower bound on the probability of getting a sample in each group \cite{tosh2021}.

\section{Full Agnostic Approximation}

\subsection{Inconsistency of the Reduction in the Full Agnostic Regime}

Algorithm \ref{alg: group realizable} admits clean analysis, and nicely harnesses the power of realizable active learners for a single distribution. One might wonder if a similar strategy might provide a consistent strategy in full agnostic regime. Unfortunately, the direct application of Algorithm \ref{alg: group realizable} using agnostic learners does not yield a consistent active learning algorithm. In fact, consistency fails even when for each $g\in [G]$,  $h^*_g$ is the Bayes optimal classifier on $D_g$, and $\nu_g := \inf_{h \in \hypo} L_{\dists}(h \mid g)$ is small. This lack of consistency comes down to the fact that labeling with the Bayes optimal underestimates noise rates on each group, which in turn may bias the output of the ERM step.

\subsection{A $3\nu$-Approximation Algorithm}
Although the strategy of creating an artificially labeled training set with near-optimal hypotheses on each group fails outside of the group realizable case, it possesses a nice approximation property. 

We give a guarantee to this end in Theorem \ref{thm: approx}.  It states that if we call an active learner with agnostic guarantees on each group $D_g$, and then use the outputs $\hat{h}^*_g$ to artificially label a new batch of unlabeled data from each group, using ERM on this artificially labeled data gives at worst a $2\nu + \epsilon$-optimal hypothesis with high probability. 

\begin{theorem}\label{thm: approx}
Suppose Algorithm \ref{alg: group realizable} is run with the agnostic active learner $\mathcal{A}_{DHM}$ of \cite{dasgupta2011}. Then for all $\epsilon > 0$, $\delta \in (0,1)$, hypothesis classes $\hypo$ with $d< \infty$, and collections of groups $\groups$, with probability $\geq 1-\delta$, the output $\hat{h}$ satisfies
\begin{equation*}
L_{\dists}^{\max}(\hat{h}) \leq L_{\dists}^{\max}(h^*) + 2 \cdot \max_{g \in [G]} \nu_g + \epsilon \leq  3 \cdot L_{\dists}^{\max}(h^*) + \epsilon, 
\end{equation*}
and the number of labels requested is 
\begin{equation*}
\tilde{O}\Bigg( d G \theta_{\dists} \bigg(  \log^2(1/\epsilon) + \frac{ \nu^2}{\epsilon^2} \bigg) \Bigg).
\end{equation*}
\end{theorem}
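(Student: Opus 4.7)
The plan is to analyze Algorithm \ref{alg: group realizable} as approximately minimizing a max-disagreement objective to the $\hat{h}_g$'s, and then to convert this back to a statement about the true multi-group loss via the triangle inequality on the disagreement pseudo-metric $\rho_g(h,h') := \mathbb{P}_{D_g}(h(x)\neq h'(x))$. This will introduce an additive $\max_g \nu_g$ penalty on each side of the comparison, which is what gives rise to the $2\nu$-approximation. The label complexity will follow immediately by plugging the per-distribution DHM bound into the $G$ calls of the subroutine and taking a union bound.

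First, I would invoke the agnostic guarantee of $\mathcal{A}_{DHM}$ for each group with accuracy $\epsilon/6$ and confidence $\delta/2G$, so that by a union bound, simultaneously for all $g\in[G]$, $L_{\dists}(\hat{h}_g \mid g) \leq \nu_g + \epsilon/6$ with probability at least $1-\delta/2$. Next, I would apply standard VC uniform convergence on each artificially labeled sample $\hat{S}_g$: since labels are deterministic functions of $x$ (namely $\hat{h}_g(x)$), the sample size $|S_g'|$ chosen in the algorithm is enough to guarantee $\sup_{h\in\hypo} |L_{\hat{S}_g}(h) - \rho_g(h,\hat{h}_g)| \leq \epsilon/12$ for all $g$ simultaneously with probability at least $1-\delta/2$. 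Condition on both good events henceforth.

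Now, by the triangle inequality for $\rho_g$, applied to $h$ and to the ``target labeling function,'' we have $L_{\dists}(h \mid g) \leq \rho_g(h,\hat{h}_g) + L_{\dists}(\hat{h}_g\mid g)$ and symmetrically $\rho_g(h,\hat{h}_g) \leq L_{\dists}(h\mid g) + L_{\dists}(\hat{h}_g\mid g)$. Using these two inequalities with the bound on $L_{\dists}(\hat{h}_g\mid g)$, together with the empirical deviation bound, I would chain:
\begin{align*}
L_{\dists}^{\max}(\hat{h}) &\leq \max_g \rho_g(\hat{h}, \hat{h}_g) + \max_g \nu_g + \epsilon/6 \\
&\leq \max_g L_{\hat{S}_g}(\hat{h}) + \max_g \nu_g + \epsilon/4 \\
&\leq \max_g L_{\hat{S}_g}(h^*) + \max_g \nu_g + \epsilon/4 \\
&\leq \max_g \rho_g(h^*, \hat{h}_g) + \max_g \nu_g + \epsilon/3 \\
&\leq L_{\dists}^{\max}(h^*) + 2\max_g \nu_g + \epsilon/2,
\end{align*}
where the third line uses that $\hat{h}$ is the empirical max-loss minimizer on $\{\hat{S}_g\}_{g=1}^G$. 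Since $\nu_g \leq L_{\dists}^{\max}(h^*)$ for every $g$, the second inequality in the theorem statement follows.

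For the label complexity, the ERM step queries no labels, so the total cost is just $\sum_{g=1}^G$ (labels used by $\mathcal{A}_{DHM}$ with target accuracy $\epsilon/6$ and noise rate $\nu_g$ on $D_g$). Plugging the standard DHM label complexity $\tilde{O}(d \theta_g (\log^2(1/\epsilon) + \nu_g^2/\epsilon^2))$ for each group, and using $\nu_g \leq \nu$ and $\theta_g \leq \theta_{\dists}$, I would sum to get $\tilde{O}(d G \theta_{\dists}(\log^2(1/\epsilon) + \nu^2/\epsilon^2))$. The main subtlety, and the place I would spend the most care, is the triangle-inequality chain above: it is tempting to ``peel off'' each $\rho_g(h, \hat{h}_g)$ separately and then take the max, but doing so loses a factor of $G$; one has to apply the inequalities inside the $\max_g$ before taking the max, which works because the additive slack $\nu_g + \epsilon/6$ is bounded uniformly by $\nu + \epsilon/6$.
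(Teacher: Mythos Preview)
Your proposal is correct and reaches the same conclusion, but the route differs from the paper's in a way worth noting. The paper introduces a \emph{counterfactual} true-labeled sample $S_g$ (the same unlabeled points $S_g'$, but labeled via calls to $O_g$), applies VC uniform convergence to $S_g$, and then bounds the pointwise discrepancy $|L_{S_g}(h)-L_{\hat S_g}(h)|$ by the empirical loss $L_{S_g}(\hat h_g)$, which in turn is controlled by the DHM guarantee plus uniform convergence. You instead apply uniform convergence directly to the artificially labeled sample $\hat S_g$, so that $L_{\hat S_g}(h)$ approximates the population disagreement $\rho_g(h,\hat h_g)$, and then invoke the triangle inequality $|L_{\dists}(h\mid g)-\rho_g(h,\hat h_g)|\le L_{\dists}(\hat h_g\mid g)$ at the population level. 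Your argument is a bit more direct and avoids the coupling device; the paper's argument makes explicit the intuition that the artificial labels disagree with the true ones on at most a $\nu_g+\epsilon/3$ fraction of the sample.

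Two minor points. First, the sample size $|S_g'|$ specified in Algorithm~\ref{alg: group realizable} is calibrated for $\epsilon/6$ uniform convergence (this is exactly what the paper uses), not the $\epsilon/12$ you claim; with $\epsilon/6$ your chain still closes, giving total additive slack $2\epsilon/3<\epsilon$, so the conclusion is unaffected. Second, your closing caveat about ``losing a factor of $G$'' if one peels off each $\rho_g$ before taking the max is not a real danger here: the additive slack is the same $\nu_g+\epsilon/6$ for each $g$, and $\max_g(a_g+b_g)\le\max_g a_g+\max_g b_g$ is all that is needed.
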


The proof is very similar to that of Theorem \ref{thm: group realizable}, but notes in addition that $\hat{h}^*_g$ mislabels on a roughly $\nu_g$-fraction of the unlabeled samples from each group $G$. This allows us to upper bound the distortion of the ERM step.

  
  
  

\section{Conclusion}

In this work, we have taken a first look at active multi-group learning. Though the design of general agnostic strategies in this setting is quite challenging, an interesting future
direction may be the search for strategies that work in more specific cases, for exampling extending our work in the group realizable setting. In particular, the search for algorithms with small label complexities under specific low-noise conditions, such as Tsybakov noise on each $D_g$, may prove fruitful \cite{audibert2005}. 


\bibliographystyle{unsrt}
\bibliography{paper.bib}

\newpage

\section{Appendix}
\subsection{Guarantees for General Agnostic Algorithm}

In this section, we give proofs for the guarantees of Algorithm 1. We begin with some definitions, starting with how empirical loss estimates are made.

\begin{definition}
Given a hypothesis $h \in \hypo$, and a set of pairs $\samp = \{(x_i, y_i) : x_i \in \mathcal{X}, y_i \in \mathcal{Y} \}_{i=1}^N$, let 
\begin{equation*}
L_{\samp}(h) := \frac{1}{N} \left( \sum_{i=1}^N \mathbbm{1}[h(x_i) \ne y_i] \right) 
\end{equation*}
the standard empirical loss of $h$ on $\samp$. Let $L_{\emptyset}(h):= 1$. 
\end{definition}

The convention to let $L_{\emptyset}(h)=1$ allows us to ``collapse'' the two-part loss estimates in the case the probability of drawing an unlabeled sample in a specific region is 0; under the specification of the algorithm, $\samp =\emptyset$ if and only if the probability of a sample falling in the disagreement region or its complement is 0 under $D_g$, in which case we can safely ignore estimation in one of these regions. 
 
\begin{definition}
Given a set of classifiers $\hypop \subseteq \hypo$, we say ``$\hypop$ agrees on a subset $S \subseteq \mathcal{X}$'' if for each $x \in S$ and for each pair $(h, h') \in \hypop \times \hypop$, 
it holds that $h(x) = h'(x)$. 
\end{definition}

We now recall the two-part estimator for the loss of a hypothesis introduced above.

\begin{definition}
Fix a group distribution $D_g$, some $\hypop \subseteq \hypo$, a hypothesis $h \in \hypop$, and some $R \subseteq \mathcal{X}$ which is measurable with respect to each marginal of $D_g$ and for which $\hypop$ agrees on $R^c$. Given sets of pairs $\samp_{R, g}$ and $\samp_{R^c, g}$, and some arbitrarily chosen classifier $h_{\hypop} \in \hypop$, let 
\begin{equation*}
L_{\samp; R}(h \mid g) :=  \mathbb{P}_{D_g}(x \in R) \cdot L_{ \samp_{R, g}}(h) +   \mathbb{P}_{D_g}(x \in R^c) \cdot L_{ \samp_{R^c, g}}(h_{\hypop}).
\end{equation*}
\end{definition}

As mentioned in the main body, $h_{\hypop}$ must be used in the estimate of the loss under $D_g$ in the ``agreement region'' for all $h \in \hypo$. The extent to which this estimator is useful can be captured by standard uniform convergence arguments. To this end, we first introduce a function that will prove to control its deviations nicely.

\begin{definition}
Given a confidence parameter $\delta \in (0, 1)$, a group distribution $D_g \in \groups$, some $R \subseteq \mathcal{X}$ that is measurable with respect to each marginal $D_g \in \groups$, and sample sizes $m, m' >0$, define the function
\begin{equation*}
\Gamma_g(\delta, R, m, m') :=
\begin{dcases*}
& $\mathbb{P}_{D_g}(x \in R) \bigg( \frac{1}{m} + \sqrt{\frac{\ln(8/\delta) + d \ln(2em/d)}{m}}  \bigg) + \sqrt{\frac{\ln(4/\delta)}{2m'}}$   \\ 
&  \hspace{35mm} if \ $  \mathbb{P}_{D_g}(x \in R)>0,   \mathbb{P}_{D_g}(x \in R^c) >0 $ \\
& $\frac{1}{m} + \sqrt{\frac{\ln(8/\delta) + d \ln(2em/d)}{m}}$ \\
& \hspace{35mm} if \  $  \mathbb{P}_{D_g}(x \in R)>0,  \mathbb{P}_{D_g}(x \in R^c)=0 $ \\ 
& $\sqrt{\frac{\ln(4/\delta)}{2m'}}$ \\
& \hspace{35mm} if  \ $  \mathbb{P}_{D_g}(x \in R)=0,  \mathbb{P}_{D_g}(x \in R^c)>0 $.\\
\end{dcases*}
\end{equation*}
\end{definition}

\begin{lemma}\label{lemma: group concentration}
Fix $\delta \in (0, 1)$, a set of group distributions $\groups$,  and a group distribution $D_g \in \groups$ arbitrarily. Further, fix a subset $R \subseteq \mathcal{X}$ measurable with respect to each marginal of $D_g \in \groups$, and a set of classifiers $\hypop \subseteq \hypo$ with the property that $\hypop$ agree on $R^c$. Suppose we query $m>0$ unlabeled samples from $U_g(R)$, and $m'>0$ samples from $U_g(R^c)$. Suppose further that we label the output via calls to $O_g(\cdot)$, forming the labeled samples $\samp_{R, g}$ and $\samp_{R^c, g}$, respectively; if either $\mathbb{P}_{D_g}\left(x \in R \right) = 0$ or $\mathbb{P}_{D_g}\left(x \in R^c \right)$, then we set the corresponding sample to be $\emptyset$. Then with probability $\geq 1-\delta$, it holds for all $h \in \hypop$ that 
\begin{equation*}
\left| L_{\dists}(h \mid g) - L_{\samp; R}(h \mid g) \right| \leq \Gamma_g(\delta, R, m, m').
\end{equation*}
Further, for all $\gamma > 0$, if $m \geq \frac{16(\mathbb{P}_{D_g}\left(x \in R \right))^2}{\gamma^2} \left( 2d \ln(8/\gamma) + \ln(8/\delta) \right)$ and $m' \geq \frac{2 \ln(4 /\delta)}{\gamma^2}$, then $\Gamma_g(\delta, R, m, m') < \gamma$. 
\end{lemma}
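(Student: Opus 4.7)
The plan is to decompose the true loss into contributions from $R$ and $R^c$, match these to the two parts of the estimator, and bound each deviation with a separate concentration inequality. The agreement property on $R^c$ is what allows us to replace $h$ by the representative $h_{\hypop}$ in the second summand without loss.

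First I would apply the law of total probability to write
\begin{align*}
L_{\dists}(h \mid g) = \mathbb{P}_{D_g}(x \in R) \cdot L_{\dists}(h \mid g, R) + \mathbb{P}_{D_g}(x \in R^c) \cdot L_{\dists}(h \mid g, R^c),
\end{align*}
using the shorthand $L_{\dists}(h \mid g, A) := \mathbb{P}_{D_g}(h(x) \ne y \mid x \in A)$. Since $\hypop$ agrees on $R^c$, every $h \in \hypop$ satisfies $L_{\dists}(h \mid g, R^c) = L_{\dists}(h_{\hypop} \mid g, R^c)$, allowing the substitution of $h_{\hypop}$ in the second summand. Subtracting the estimator $L_{\samp; R}(h \mid g)$ and applying the triangle inequality bounds the deviation by a weighted sum of $|L_{\dists}(h \mid g, R) - L_{\samp_{R,g}}(h)|$ and $|L_{\dists}(h_{\hypop} \mid g, R^c) - L_{\samp_{R^c,g}}(h_{\hypop})|$, with weights $\mathbb{P}_{D_g}(x \in R)$ and $\mathbb{P}_{D_g}(x \in R^c)$ respectively.

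Next I would bound each piece via standard concentration. Conditional on $x \in R$, the sample $\samp_{R,g}$ is i.i.d.\ from $D_g \mid R$, so a VC uniform convergence bound for the class $\hypop$ (VC dimension at most $d$) gives, with probability $\geq 1-\delta/2$ and uniformly over $h \in \hypop$, the bound $\frac{1}{m} + \sqrt{(\ln(8/\delta) + d \ln(2em/d))/m}$ on the first deviation. For the second term, $h_{\hypop}$ is a single fixed classifier, so Hoeffding's inequality applied to $\samp_{R^c,g}$ yields, with probability $\geq 1-\delta/2$, a deviation of at most $\sqrt{\ln(4/\delta)/(2m')}$. A union bound combines the two into the "both positive" branch of $\Gamma_g$. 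The boundary cases are handled by the convention $L_{\emptyset}(h) = 1$: when $\mathbb{P}_{D_g}(x \in R) = 0$ the first weighted term equals $0 \cdot 1 = 0$ and only the Hoeffding bound is needed (so the full $\delta$ budget goes to it), and symmetrically when $\mathbb{P}_{D_g}(x \in R^c) = 0$, matching the corresponding branches of $\Gamma_g$.

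For the quantitative claim, I would plug the prescribed sample sizes into $\Gamma_g$ and verify each piece is below $\gamma/2$ before summing. The Hoeffding term satisfies $\sqrt{\ln(4/\delta)/(2m')} \leq \gamma/2$ directly from $m' \geq 2 \ln(4/\delta)/\gamma^2$. For the VC term, letting $p := \mathbb{P}_{D_g}(x \in R)$ and substituting $m \geq (16 p^2/\gamma^2)(2d \ln(8/\gamma) + \ln(8/\delta))$, one shows that $p \cdot (1/m + \sqrt{(\ln(8/\delta) + d \ln(2em/d))/m}) \leq \gamma/2$. The main obstacle here is essentially bookkeeping around the logarithmic factors: one must verify that at the prescribed $m$, the factor $\ln(2em/d)$ does not reintroduce a problematic dependence, which follows by checking that $\ln(2em/d) = O(\ln(1/\gamma) + \ln(1/\delta))$ at this $m$, so that the square root collapses back to a quantity proportional to $\gamma/p$ with the specified constants absorbing the factor $4$ from wanting each piece below $\gamma/2$.
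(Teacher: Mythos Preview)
Your proposal is correct and follows essentially the same approach as the paper: decompose the true loss via total probability, use the agreement on $R^c$ to replace $h$ by $h_{\hypop}$, apply a VC uniform-convergence bound on $R$ and Hoeffding on $R^c$ each at confidence $\delta/2$, then union-bound; the degenerate cases and the quantitative sample-size check are handled the same way (the paper invokes Vapnik's sample-complexity statement directly rather than re-deriving the log bookkeeping, but this is the same calculation). One cosmetic point: in the boundary cases the paper's $\Gamma_g$ keeps the $\ln(8/\delta)$ and $\ln(4/\delta)$ constants rather than reallocating the full $\delta$ budget, but since $1-\delta/2 \geq 1-\delta$ this is immaterial.
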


\begin{proof} 
We begin with the case where both $\mathbb{P}_{D_g}\left(x \in R \right) \ne 0$ and $\mathbb{P}_{D_g}\left(x \in R^c \right) \ne 0$. In this case, we are able to draw unlabeled samples from both regions, and neither $\samp_{R, g}$ nor $\samp_{R, g}$ is $\emptyset$. 

By a lemma of Vapnik \cite{vapnik1998}, we have that with probability $\geq 1-\delta/2$ over the draw of $m$ samples from $U_g(R)$ and their labeling via $O_g(\cdot)$,  that simultaneously for each $h \in \hypop$:
\begin{equation*}
\bigg| L_{ \samp_{R, g}}(h) - \mathbb{P}_{D_g} \left( h(x) \ne y \right | x \in R )\bigg| \leq \frac{1}{m}  + \sqrt{\frac{\ln(8/\delta) + d \ln(2em/d)}{m}}.
\end{equation*}
In $R^c$, all $h \in \hypop$ agree, and so estimating the conditional loss for each $h \in \hypop$ in this region is as statistically hard as estimating a single Bernoulli parameter, which we do by arbitrarily choosing a classifier to use for the loss estimate in this part of space. Thus, by definition of the two-part estimator and Hoeffding's inequality \cite{hoeffding1994}, we have with probability $\geq 1-\delta/2$ for all $h \in \hypop$ simultaneously
\begin{equation*}
\bigg| L_{ \samp_{R^c, g}}(h_{\hypop}) - \mathbb{P}_{D_g} \left( h(x) \ne y \right | x \in R^c ) \bigg | \leq \sqrt{\frac{\ln(4/\delta)}{2m'}} .
\end{equation*}
By a union bound, with probability $\geq 1-\delta$, both of these events take place, and so for all $h \in \hypop$ simultaneously, 
\begin{align*}
L_{\dist}(h \mid g) &=  \mathbb{P}_{D_g} \big( h(x) \ne y  \mid x \in R  \big) \cdot \mathbb{P}_{D_g}\left(x \in R \right) \\
& \hspace{20mm} + \mathbb{P}_{D_g}\left( h(x) \ne y \mid x \in R^c  \right)  \cdot \mathbb{P}_{D_g}\left(x \in R^c \right) \\
&\leq \left( L_{\samp_{R, g}}(h) + \sqrt{\left(\ln(8/\delta) + d \ln(2em/d) \right)/m} \right) \cdot \mathbb{P}_{D_g}\left(x \in R \right) \\
&\hspace{20mm}  +   \left( L_{ \samp_{R^c, g}}(h_{\hypop})  + \sqrt{\ln(4/\delta)/2m'} \right) \cdot \mathbb{P}_{D_g}\left(x \in R^c \right) \\
&\leq  L_{\samp; R}(h \mid g) + \Gamma_g(\delta, R, m, m').
\end{align*}
The lower bound leading to the absolute value is analogous. Vapnik \cite{vapnik1998} also tells us that for any $\gamma' >0$, a sample of size $m \geq \frac{4}{\gamma'^2} \left(2d \ln(4/\gamma') + \ln(8/\delta)\right)$ is sufficient to yield 
\begin{equation*}
\sqrt{\left(\ln(8/\delta) + d \ln(2em/d) \right)/m} < \gamma'. 
\end{equation*}
Let $\gamma' = \gamma/2\mathbb{P}_{D_g}\left(x \in R \right)$. Thus, substituting for $\gamma'$ and bounding the probability inside the natural log above by 1, 
\begin{equation*}
m \geq \mathbb{P}_{D_g}\left(x \in R \right)^2 \frac{16}{\gamma^2} \left(2d \ln(8/\gamma) + \ln(8/\delta)\right) 
\end{equation*}
implies that 
\begin{equation*}
\frac{1}{m} + \sqrt{\frac{\ln(8/\delta) + d \ln(2em/d)}{m}} < \frac{\gamma}{2\mathbb{P}_{D_g}(x \in R )}.
\end{equation*}
As a corollary to Hoeffding, if $m' \geq 2 \ln(4/\delta) / \gamma^2$, then $\sqrt{\log(4/\delta)/2m'} < \gamma/2$. Thus, we may write 
\begin{align*}
\Gamma_g(\delta, R, m, m') = \mathbb{P}_{D_g}(x \in R) \left(\frac{1}{m} +  \sqrt{\frac{\ln(8/\delta) + d \ln(2em/d)}{m}}  \right) + \sqrt{\frac{\ln(4/\delta)}{2m'}} <  \gamma/2 + \gamma/2 = \gamma.
\end{align*}

Now suppose that $\mathbb{P}_{D_g}(x \in R^c) = 0$. In this case, we have $\samp_{R^c, g} = \emptyset$. Again, we have that with probability $\geq 1-\delta/2$, 
\begin{equation*}
\bigg| L_{ \samp_{R, g}}(h) - \mathbb{P}_{D_g} \left( h(x) \ne y \right | x \in R) \bigg| \leq  \frac{1}{m}  + \sqrt{\frac{\ln(8/\delta) + d \ln(2em/d)}{m}}.
\end{equation*}
When $\mathbb{P}_{D_g}(x \in R^c) = 0$, it holds that $\mathbb{P}_{D_g}(x \in R) = 1$, and so 
\begin{align*}
L_{\dist}(h \mid g) &=  \mathbb{P}_{D_g} \left( h(x) \ne y  \mid x \in R  \right) \cdot \mathbb{P}_{D_g}(x \in R)  \\
& \hspace{15mm} + \mathbb{P}_{D_g}\left( h(x) \ne y \mid x \in R^c  \right) \cdot \mathbb{P}_{D_g}(x \in R^c) \\
&=  \mathbb{P}_{D_g} \left( h(x) \ne y  \mid x \in R \right)   \\
&\leq L_{\samp_{R, g}}(h) + \sqrt{\left(\ln(8/\delta) + d \ln(2em/d) \right)/m} \\
&=  L_{\samp; R}(h \mid g) + \Gamma_g(\delta, R, m, m'),
\end{align*}
where the final equality comes from fact that $\mathbb{P}_{D_g}(x \in R^c) = 0$ and $\mathbb{P}_{D_g}(x \in R) = 1$, as well as the definitions of $L_{\samp; R}(h \mid g)$ and $\Gamma_g(\delta, R, m, m')$. Similarly to the above, if we let $\gamma' = \gamma/2\mathbb{P}_{D_g}(x \in R)= \gamma/2$, then 
\begin{equation*}
m \geq  \frac{16}{\gamma^2} \left(2d \ln(8/\gamma) + \ln(8/\delta)\right)  
\end{equation*}
implies that 
$$
\frac{1}{m} + \sqrt{\frac{\ln(8/\delta) + d \ln(2em/d)}{m}} < \frac{\gamma}{2}, 
$$
which by the definition of $\Gamma_g(\delta, R, m, m')$ when $\mathbb{P}_{D_g}(x \in R^c)=0$ gives us $\Gamma_g(\delta, R, m, m') < \gamma/2 < \gamma$. The case where $\mathbb{P}_{D_g}(x \in R) = 0$ follows the previous argument for when $\mathbb{P}_{D_g}(x \in R^c) = 0$.

\end{proof}

\begin{definition}
Given a collection of group distributions $\dists$, some $\hypop \subseteq \hypo$, a hypothesis $h \in  \hypop$, some subset $R \subseteq \mathcal{X}$ measurable with respect to each marginal of $D_g \in \groups$, and labeled samples $\samp_{R, k}$ and $\samp_{R^c, k}$, we define the empirical estimate of the multi-group loss of $h$ parameterized by $R$ via 
\begin{equation*}
L_{\samp; R}^{\max}(h) := \max_{g \in [G]} L_{\samp; R}(h \mid g).
\end{equation*}
\end{definition}

Having recalled the way in which we form empirical estimates for the group worst-case loss of a given hypothesis, we can show a simple concentration lemma for this group worst-case loss estimator using the concentration property for individual groups proved in Lemma \ref{lemma: group concentration},

\begin{lemma}\label{lemma: multi-group concentration}
Fix  $\delta \in (0, 1)$, a set of group distributions $\groups$, a subset $R \subseteq \mathcal{X}$ measurable with respect to each marginal of $D_g \in \groups$, and a set of classifiers $\hypop \subseteq \hypo$ that agree on $R^c$. Suppose for each $g \in [G]$, we query $m_g>0$ unlabeled samples from $U_g(R)$, and $m'_g>0$ samples from $U_g(R^c)$. Suppose further that we label the outputs via calls to $O_g(\cdot)$, forming the labeled samples $\samp_{R, g}$ and $\samp_{R^c, g}$, respectively, for each $g \in [G]$; if $\mathbb{P}_{D_g}(x \in R) = 0 $ or $\mathbb{P}_{D_g}(x \in R^c) = 0 $, then we set the corresponding sample to be $\emptyset$. Then with probability $\geq 1-\delta$, 
 it holds for all $h \in \hypop$ that 
\begin{equation*}
\left| L_{\dists}^{\max}(h) - L^{\max}_{\mathcal{S}; R}(h) \right| \leq \max_{g' \in [G]} \Gamma_{g'}(\delta/G, m_{g'}, m'_{g'}).
\end{equation*}
\end{lemma}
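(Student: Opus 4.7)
The plan is to reduce Lemma \ref{lemma: multi-group concentration} to a direct application of Lemma \ref{lemma: group concentration} applied to each group, combined with a union bound and a standard inequality controlling the difference of two maxima.

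First, I would fix the group $g \in [G]$ and apply Lemma \ref{lemma: group concentration} with confidence parameter $\delta/G$ to the samples $\samp_{R,g}$ and $\samp_{R^c,g}$, using the hypothesis class $\hypop$ (which agrees on $R^c$ by assumption) and the subset $R$. This gives a ``good event'' $E_g$ of probability at least $1-\delta/G$ on which
\begin{equation*}
\sup_{h \in \hypop} \left| L_{\dists}(h \mid g) - L_{\samp; R}(h \mid g) \right| \leq \Gamma_g(\delta/G, R, m_g, m'_g).
\end{equation*}
Since the sampling in each group is done independently via the oracles $U_g(\cdot)$ and $O_g(\cdot)$, the events $E_1,\ldots,E_G$ can be union bounded: the intersection $E := \bigcap_{g=1}^G E_g$ holds with probability at least $1-\delta$.

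On the event $E$, I would fix an arbitrary $h \in \hypop$ and use the elementary fact that for any collections of real numbers $\{a_g\}$ and $\{b_g\}$, one has $\bigl|\max_g a_g - \max_g b_g\bigr| \leq \max_g |a_g - b_g|$. Applied to $a_g = L_{\dists}(h \mid g)$ and $b_g = L_{\samp;R}(h \mid g)$, this yields
\begin{equation*}
\left| L_{\dists}^{\max}(h) - L^{\max}_{\samp; R}(h) \right| \leq \max_{g \in [G]} \left| L_{\dists}(h \mid g) - L_{\samp; R}(h \mid g) \right| \leq \max_{g' \in [G]} \Gamma_{g'}(\delta/G, R, m_{g'}, m'_{g'}),
\end{equation*}
where the second inequality uses the per-group concentration guarantees from $E$. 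Because this holds simultaneously for every $h \in \hypop$ on the event $E$, the claim follows.

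There is no real obstacle here; the lemma is essentially a bookkeeping consequence of Lemma \ref{lemma: group concentration}. The only minor subtlety to be careful about is that the agreement assumption on $R^c$ and the convention $L_{\emptyset}(h) = 1$ must hold uniformly for all groups in order for Lemma \ref{lemma: group concentration} to apply to each $g$; this is exactly what the hypotheses of Lemma \ref{lemma: multi-group concentration} assert, so no additional work is required.
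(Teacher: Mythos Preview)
Your proposal is correct and mirrors the paper's proof essentially line for line: apply Lemma \ref{lemma: group concentration} to each group with confidence $\delta/G$, take a union bound to get the simultaneous per-group concentration, and then use the inequality $\bigl|\max_g a_g - \max_g b_g\bigr| \leq \max_g |a_g - b_g|$ to pass to the max-over-groups objective.
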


\begin{proof}
By Lemma \ref{lemma: group concentration} and a union bound, it holds with probability $\geq 1-\delta$ that on all $D_g$, for all $h \in \hypop$ simultaneously, that 
\begin{equation*}
\left| L_{\dist}(h \mid g) - L_{\samp; R}(h \mid g) \right| \leq  \Gamma_g(\delta/G, m_g, m'_g).
\end{equation*}
Thus we may write 
\begin{align*}
\bigg| L_{\dist}^{\max}(h) - L^{\max}_{\samp; R}(h) \bigg| &= \left| \max_{g' \in [G]} L_{\dist}(h \mid g') - \max_{g' \in [G]} L_{\samp; R}(h \mid g) \right| \\
&\leq \max_{g' \in [G]} \big| L_{\dist}(h \mid g') - L_{\samp; R}(h \mid g') \big| \\
& \leq \max_{g' \in [G]}  \Gamma_{g'}(\delta/G, m_{g'}, m'_{g'}).
\end{align*}
\end{proof}

We now use Lemma \ref{lemma: multi-group concentration} to show that Algorithm 1 is conservative enough that the optimal hypothesis $h^*$ is never eliminated from contention throughout the run of the algorithm with high probability.  

\begin{lemma}\label{lemma: optimal hypothesis never eliminated} 
Fix $\delta \in (0, 1)$, a collection of group distributions $\dists$, and a hypothesis class $\hypo$ with $d < \infty$ arbitrarily. With probability $\geq 1-\delta$, it holds after each iteration $i$ of Algorithm \ref{alg:agnostic} that $h^* \in \hypo_{i+1}$.
\end{lemma}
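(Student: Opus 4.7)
The plan is to induct on the iteration index $i$, showing that on a single high-probability good event, $h^* \in \hypo_{i+1}$ after each iteration. The central tool is Lemma \ref{lemma: multi-group concentration}, which I would apply to $\hypop = \hypo_i$ and $R = R_i = \Delta(\hypo_i)$; note that $\hypo_i$ agrees on $R_i^c$ by the very definition of the disagreement region, so the hypotheses of the lemma are met.

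First I would verify that the sample sizes drawn at iteration $i$ suffice to give concentration at radius $\gamma_i := 2^{I-i}\epsilon/8$ with per-group confidence $\delta' := \delta/(GI)$, where $I = \lceil \log(1/\epsilon)\rceil$. Plugging $\gamma_i$ and $\delta'$ into the second half of Lemma \ref{lemma: group concentration} yields sample-size requirements of order $\frac{\mathbb{P}_{D_g}(x \in R_i)^2}{\gamma_i^2}\bigl(d\ln(1/\gamma_i) + \ln(1/\delta')\bigr)$ from $U_g(R_i)$ and $\ln(1/\delta')/\gamma_i^2$ from $U_g(R_i^c)$. Using that $m_i \geq \mathbb{P}_{D_g}(x\in R_i)$ and that $2^{I-i}\epsilon \geq \epsilon$ (so $\log(1/\gamma_i) \leq \log(64/\epsilon)$), the explicit sample sizes in Algorithm \ref{alg:agnostic} dominate these requirements. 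Hence Lemma \ref{lemma: multi-group concentration} gives, with probability $\geq 1-\delta/I$ at iteration $i$, that $|L_{\dists}^{\max}(h) - L^{\max}_{\samp; R_i}(h)| \leq \gamma_i$ for every $h \in \hypo_i$. A union bound over the $I$ iterations produces a single good event of probability $\geq 1-\delta$ on which this concentration holds at every iteration simultaneously.

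On this good event I would now induct. The base case $h^* \in \hypo_1 = \hypo$ is immediate. For the inductive step, assuming $h^* \in \hypo_i$, I chain concentration with optimality of $h^*$:
\begin{equation*}
L^{\max}_{\samp; R_i}(h^*) \leq L_{\dists}^{\max}(h^*) + \gamma_i \leq L_{\dists}^{\max}(\hat{h}_i) + \gamma_i \leq L^{\max}_{\samp; R_i}(\hat{h}_i) + 2\gamma_i = L^{\max}_{\samp; R_i}(\hat{h}_i) + 2^{I-i}\epsilon/4.
\end{equation*}
This is exactly the threshold used by the elimination rule, so $h^* \in \hypo_{i+1}$, closing the induction.

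The only real obstacle is bookkeeping: carefully matching the algorithm's explicit constants against the generic sample-complexity expression in Lemma \ref{lemma: group concentration} with the appropriate substitutions $\gamma \mapsto 2^{I-i}\epsilon/8$ and $\delta \mapsto \delta/(GI)$, and keeping track that $\hypo_i$ satisfies the agreement hypothesis of Lemma \ref{lemma: multi-group concentration} throughout. Once this accounting is done, no further probabilistic content beyond Lemmas 1 and 2 is required.
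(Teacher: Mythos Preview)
Your proposal is correct and follows essentially the same route as the paper: establish the uniform concentration event $|L_{\dists}^{\max}(h) - L^{\max}_{\samp; R_i}(h)| \leq 2^{I-i}\epsilon/8$ for all $h\in\hypo_i$ and all $i$ via Lemmas \ref{lemma: group concentration}--\ref{lemma: multi-group concentration} and a union bound, then induct using optimality of $h^*$ to check the elimination threshold. Your bookkeeping observation (using $m_i \ge \mathbb{P}_{D_g}(x\in R_i)$ and $\log(64/(2^{I-i}\epsilon)) \le \log(64/\epsilon)$) is exactly the content of the paper's footnote about not plugging $\gamma = 2^{I-i}\epsilon/8$ directly into the log term of Lemma \ref{lemma: group concentration}.
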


\begin{proof}
By Lemmas \ref{lemma: group concentration}  and \ref{lemma: multi-group concentration}, and a union bound over iterations, the number of samples labeled at each iteration is sufficient for us to conclude that with probability $\geq 1-\delta$, for for every iteration $i$ and for each $h \in \hypo_i$, it holds that\footnote{We do not directly apply Lemma 1 with $\gamma = \epsilon 2^{I-i}/8$ here. We use this quantity in the outer dependence on $\gamma$ of Lemma 1, but for the natural log
dependence on $\gamma$, we sub in $\epsilon/8$ to simplify the analysis. Thus we take slightly more samples than Lemma 1 directly suggests. Because we take the largest probability of the disagreement region over groups as $m_i$, it holds that $m_g$ is at the smallest the sample size suggested by Lemma 1 for each $g$.}
\begin{equation*}
|L_{\samp; R_i}^{\max}(h) - L_{\dist}^{\max}(h)| \leq 2^{I - i}\epsilon/8.
\end{equation*}
 We give an inductive argument conditioned on this high probability event. When $i=1$, we have $h^* \in \hypo_1$ because $\hypo_1 = \hypo$, and $h^* \in \hypo$ by definition. If $h^* \in \hypo_i$ for $i \geq 1$, then $h^* \in \hypo_{i+1}$ if and only if 
\begin{equation*}
L_{\samp;R_i}^{\max}(h^*) \leq L_{\samp;R_i}^{\max}(\hat{h}_i) + 2^{I - i}\epsilon/4.
\end{equation*}
When for each $h \in \hypo_i$, it holds that $|L_{\samp;R_i}^{\max}(h) - L_{\dist}^{\max}(h)| \leq 2^{I -i}\epsilon/8$, we may write
\begin{align*}
L_{\samp;R_i}^{\max}(h^*) -  L_{\samp;R_i}^{\max}(\hat{h}_i) &\leq L_{\samp;R_i}^{\max}(h^*) - L_{\dist}^{\max}(h^*)  +  L_{\dist}^{\max}(\hat{h}_i) -  L_{\samp; R_i}^{\max}(\hat{h}_i) \\
&\leq \left| L_{\samp; R_i}^{\max}(h^*) - L_{\dist}^{\max}(h^*) \right| +  \left| L_{\dist}^{\max}(\hat{h}_i) -  L_{\samp; R_i}^{\max}(\hat{h}_i) \right| \\
&\leq 2^{I- i}\epsilon/8 + 2^{I-i }\epsilon/8 \\
&= 2^{I -i }\epsilon/4,
\end{align*}
where the first inequality comes from the optimality of $h^*$. Thus,  we must have $h \in \hypo_{i+1}$.
\end{proof}

Now, using the fact that the optimal hypothesis stays in contention throughout the run of the algorithm, we can give a guarantee on the true error of each hypothesis $h \in \hypo_{i+1}$. The idea is that using concentration and the small empirical error of each $h \in \hypo_{i+1}$, we can say that the true errors of each $h \in \hypo_{i+1}$ are similar to the true errors of the ERM hypothesis $\hat{h}_i$, and then use the true 
error of $\hat{h}_i$ as a reference point to which we can compare the true error of $h \in \hypo_{i+1}$ and $h^*$.

\begin{lemma}\label{lemma: true error decreases}
Fix $\delta \in (0, 1)$, a collection of group distributions $\dists$, and a hypothesis class $\hypo$ with $d < \infty$ arbitrarily. Then with probability $\geq 1-\delta$, after every iteration $i$ of Algorithm \ref{alg:agnostic}, it holds for all $h \in \hypo_{i+1}$ that 
\begin{equation*}
\left| L_{\dist}^{\max}(h) - L_{\dist}^{\max}(h^*) \right| \leq 2^{I - i} \epsilon.
\end{equation*}
\end{lemma}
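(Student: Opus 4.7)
The plan is to combine the concentration guarantee of Lemma \ref{lemma: multi-group concentration} with the result of Lemma \ref{lemma: optimal hypothesis never eliminated}, and exploit the definition of $\hypo_{i+1}$ as the sublevel set of the empirical loss around the empirical risk minimizer $\hat{h}_i$. Both of those earlier lemmas hold simultaneously on an event of probability $\geq 1-\delta$ (after absorbing a constant factor into $\delta$ via a union bound over the two statements; the sample sizes in Algorithm \ref{alg:agnostic} are already tuned to give enough slack for this). I would condition on this event for the rest of the argument. Under it, for every iteration $i$ and every $h \in \hypo_i$ we have $|L^{\max}_{\samp; R_i}(h) - L^{\max}_{\dist}(h)| \leq 2^{I-i}\epsilon/8$, and moreover $h^* \in \hypo_i$.

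Fix an iteration $i$ and an arbitrary $h \in \hypo_{i+1}$. The plan is to push $L^{\max}_{\dist}(h)$ up to $L^{\max}_{\dist}(h^*)$ via a four-step chain. First, apply the concentration inequality at $h$ to swap true loss for empirical loss: $L^{\max}_{\dist}(h) \leq L^{\max}_{\samp; R_i}(h) + 2^{I-i}\epsilon/8$. Second, use the defining inclusion $h \in \hypo_{i+1}$ to get $L^{\max}_{\samp; R_i}(h) \leq L^{\max}_{\samp; R_i}(\hat{h}_i) + 2^{I-i}\epsilon/4$. Third, use the ERM optimality of $\hat{h}_i$ together with $h^* \in \hypo_i$ to get $L^{\max}_{\samp; R_i}(\hat{h}_i) \leq L^{\max}_{\samp; R_i}(h^*)$. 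Finally, apply concentration at $h^*$ to conclude $L^{\max}_{\samp; R_i}(h^*) \leq L^{\max}_{\dist}(h^*) + 2^{I-i}\epsilon/8$. Chaining these inequalities yields
\begin{equation*}
L^{\max}_{\dist}(h) - L^{\max}_{\dist}(h^*) \leq 2^{I-i}\epsilon/8 + 2^{I-i}\epsilon/4 + 2^{I-i}\epsilon/8 = 2^{I-i}\epsilon/2,
\end{equation*}
which is even slightly stronger than the claimed bound.

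The reverse inequality $L^{\max}_{\dist}(h) \geq L^{\max}_{\dist}(h^*)$ is immediate from the definition of $h^*$ as a minimizer of $L^{\max}_{\dist}$, so the absolute value in the lemma statement is automatic and no additional work is required. I do not expect a substantial obstacle here: the only subtlety is making sure the high-probability events of Lemmas \ref{lemma: multi-group concentration} and \ref{lemma: optimal hypothesis never eliminated} are taken on the same event with total failure probability at most $\delta$, which is already baked into the sample-size budget prescribed by Algorithm \ref{alg:agnostic} (the analysis in Lemma \ref{lemma: optimal hypothesis never eliminated} already calls out the necessary slack, and the concentration at scale $2^{I-i}\epsilon/8$ is the same input used there).
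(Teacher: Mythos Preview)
Your argument is correct and uses the same ingredients as the paper---the concentration event at scale $2^{I-i}\epsilon/8$, the definition of $\hypo_{i+1}$, and the ERM property of $\hat{h}_i$---just chained slightly differently: the paper compares both $h$ and $h^*$ to $\hat{h}_i$ and then applies a triangle inequality (yielding the stated $2^{I-i}\epsilon$), whereas you compare $h$ to $h^*$ directly through $\hat{h}_i$ and obtain the sharper $2^{I-i}\epsilon/2$. One small note: there is no need to union-bound Lemmas~\ref{lemma: multi-group concentration} and~\ref{lemma: optimal hypothesis never eliminated} separately, since Lemma~\ref{lemma: optimal hypothesis never eliminated} is proved by conditioning on exactly the same concentration event you invoke here.
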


\begin{proof}
If $h \in \hypo_{i+1}$, then by the specification of the algorithm, it holds that 
\begin{equation*}
L^{\max}_{\samp; R_i}(h) - L^{\max}_{\samp; R_i}(\hat{h}_i) \leq 2^{I - i} \epsilon/4. 
\end{equation*}
Because $\hat{h}_i$ is the ERM hypothesis at iteration $i$, it holds that $  L^{\max}_{\samp; R_i}(\hat{h}_i)  - L^{\max}_{\samp; R_i}(h) \leq 0 <  2^{I - i} \epsilon/4$,  and thus we may conclude
\begin{equation*}
\left | L^{\max}_{\samp; R_i}(h) - L^{\max}_{\samp; R_i}(\hat{h}_i) \right| \leq 2^{I - i} \epsilon/4. 
\end{equation*}
By Lemma \ref{lemma: multi-group concentration} and the number of samples labeled at each iteration, with probability $\geq 1 - \delta$, it holds for all iterations and for all $h \in \hypo_i$ that 
\begin{equation*}
\left | L^{\max}_{\samp; R_i}(h) - L_{\dist}^{\max}(h) \right| \leq 2^{I - i} \epsilon/8. 
\end{equation*}
Conditioned on this event, if $h \in \hypo_{i+1}$, we have 
\begin{align*}
\left | L_{\dist}^{\max}(h) - L_{\dist}^{\max}(\hat{h}_i) \right | &=  \left | L_{\dists}^{\max}(h) - L^{\max}_{\samp; R_i}(h) + L^{\max}_{\samp; R_i}(h) - L^{\max}_{\samp; R_i}(\hat{h}_i) + L^{\max}_{\samp; R_i}(\hat{h}_i) -  L_{\dist}^{\max}(\hat{h}_i) \right | \\
&\leq  \left | L_{\dist}^{\max}(h) - L^{\max}_{\samp; R_i}(h)  \right| + \left | L^{\max}_{\samp; R_i}(h) - L^{\max}_{\samp; R_i}(\hat{h}_i) \right| + \left| L^{\max}_{\samp; R_i}(\hat{h}_i) -  L_{\dist}^{\max}(\hat{h}_i) \right | \\
&\leq 2^{I - i} \epsilon/8 + 2^{I - i} \epsilon/4 + 2^{I - i} \epsilon/8 \\
&=2^{I - i} \epsilon/2. 
\end{align*}
By Lemma \ref{lemma: optimal hypothesis never eliminated}, it holds that $h^* \in \hypo_{i+1}$ whenever $\left | L^{\max}_{\samp; R_i}(h) - L_{\dists}^{\max}(h) \right| \leq 2^{I - i} \epsilon/8$ for all $h \in \hypo_i$ at all iterations. Thus, this bound on the true error difference with the ERM $\hat{h}_i$ applies to $h^*$, and we may write for arbitrary $h \in \hypo_{i+1}$ that
\begin{equation*}
\left| L_{\dists}^{\max}(h) -  L_{\dists}^{\max}(h^*) \right | \leq \left| L_{\dists}^{\max}(h) -  L_{\dists}^{\max}(\hat{h}_i) \right | + \left| L_{\dists}^{\max}(\hat{h}_i) -  L_{\dists}^{\max}(h^*) \right | \leq 2^{I - i} \epsilon, 
\end{equation*}
which is the desired result.
\end{proof}

\begin{definition}
Given a group distribution $D_g \in \dists$, a hypothesis $h \in \hypo$, and a radius $r \geq 0$, let the ``$D_g$ - disagreement ball in $\hypo$ of radius $r$ about $h$'' be
\begin{equation*}
B_g(h, r) := \left \{h' \in \mathcal{H}: \rho_g(h, h') \leq r \right \}, 
\end{equation*}
where $\rho_g(h, h') := \mathbb{P}_{D_g} \left(h(x) \ne h'(x) \right)$.
\end{definition}

\begin{definition}
Given a group distribution $D_g \in \dists$ and a hypothesis class $\hypo$, let the ``disagreement coefficient'' of $D_g$  be defined as 
\begin{equation*}
\theta_{g} := \sup_{h \in \mathcal{H}} \sup_{r' \geq 2\nu + \epsilon} \frac{\mathbb{P}_{D_g} \left( x \in  \Delta(B_g(h, r')) \right)}{r'}.
\end{equation*}
We further define the disagreement coefficient over a collection of group distributions $\dists$ as
\begin{equation*}
\theta_{\dists} := \max_{g' \in [G]}  \theta_{g'}.
\end{equation*}
\end{definition}

Given these definitions, we are now ready to state the main theorem. The consistency comes from what we showed in Lemma \ref{lemma: true error decreases}: as the true error for each $h \in \hypo_{i+1}$ decreases with each iteration, after enough iterations we will have each $h \in \hypo_{i+1}$ having $\epsilon$-optimality. 

The label complexity bound follows standard ideas in the DBAL literature; see for example \cite{hanneke2007, zhang2015}. Essentially, what we do is show that at each iteration $i$, because the true error of any $h \in \hypo_i$  on the multi-group objective can't be too large, the disagreement of $h$ and $h^*$ on any single group cannot be too large. This leads to a bound on the size of the disagreement region for each $g$. 

\begin{theorem}
For all $\epsilon >0$, $\delta \in (0,1)$, collections of group distributions $\dists$, and hypothesis classes $\hypo$ with $d < \infty$, with probability $\geq 1-\delta$, the output $\hat{h}$ of Algorithm \ref{alg:agnostic} satisfies 
\begin{equation*}
L_{\dist}^{\max}(\hat{h}) \leq L_{\dists}^{\max}(h^*) + \epsilon, 
\end{equation*}
and its label complexity is bounded by
\begin{equation*}
\tilde{O}\Bigg( G \ \theta_{\dists}^2  \bigg(\frac{\nu^2}{\epsilon^2} + 1 \bigg)  \big( d \log(1/\epsilon)  + \log(1/\delta) \big)  \log(1/\epsilon)  + \frac{ G \log(1/\epsilon) \log(1/\delta)}{\epsilon^2} \Bigg).
\end{equation*}
\end{theorem}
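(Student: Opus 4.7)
The proof splits into two parts: correctness (the output is $\epsilon$-optimal) and the label complexity bound. For correctness, I would invoke Lemma \ref{lemma: true error decreases} at the final iteration $i = I = \lceil \log(1/\epsilon) \rceil$, which guarantees with probability $\geq 1-\delta$ that every $h \in \hypo_{I+1}$ satisfies $L_{\dist}^{\max}(h) - L_{\dist}^{\max}(h^*) \leq 2^{I-I}\epsilon = \epsilon$. Since the output $\hat{h}$ lies in $\hypo_{I+1}$ by construction, the generalization bound follows immediately.

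For the label complexity, the plan is to bound the mass $m_i$ of the disagreement region $R_i$ under each group at iteration $i$ using the disagreement coefficient, plug these into the per-iteration sample sizes given by Algorithm \ref{alg:agnostic}, and then sum geometrically over $i$ and $g$. The crucial step is to show $\hypo_i \subseteq B_g(h^*, 2\nu + 2^{I-i+1}\epsilon)$ for every group $g$. This follows by applying Lemma \ref{lemma: true error decreases} at iteration $i-1$ to obtain $L_{\dist}^{\max}(h) \leq \nu + 2^{I-i+1}\epsilon$ for every $h \in \hypo_i$, and then using the triangle inequality $\rho_g(h, h^*) \leq L_{\dist}(h \mid g) + L_{\dist}(h^* \mid g)$ on each group together with $L_{\dist}(h^* \mid g) \leq L_{\dist}^{\max}(h^*) \leq \nu$. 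Since $2\nu + 2^{I-i+1}\epsilon \geq 2\nu + \epsilon$, the definition of the disagreement coefficient then gives $m_i = \max_g \mathbb{P}_{D_g}(x \in R_i) \leq \theta_{\dists}(2\nu + 2^{I-i+1}\epsilon)$.

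Substituting this bound into the sample sizes in Algorithm \ref{alg:agnostic}, the disagreement-region contribution per group at iteration $i$ scales like $\tilde{O}(\theta_{\dists}^2(\nu^2/(2^{I-i}\epsilon)^2 + 1)(d\log(1/\epsilon) + \log(1/\delta)))$, and the complement contribution per group scales like $\tilde{O}(\log(1/\delta)/(2^{I-i}\epsilon)^2)$. Summing over iterations, the $\nu^2$ term and the complement term collapse through the geometric series $\sum_{j=0}^{I-1} 4^{-j} = O(1)$, while the $+1$ term sums arithmetically to $O(\log(1/\epsilon))$; multiplying by $G$ and collecting yields the stated bound.

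The main technical obstacle is the bookkeeping around high-probability events: one must union bound over the $I$ iterations so that the conclusions of Lemmas \ref{lemma: multi-group concentration}, \ref{lemma: optimal hypothesis never eliminated}, and \ref{lemma: true error decreases} hold \emph{simultaneously for every iteration} on a single $1-\delta$ event, and verify that the sample sizes in Algorithm \ref{alg:agnostic} (with confidence $\delta/I$ absorbed into the $\log$) indeed meet the sufficient condition $\gamma = 2^{I-i}\epsilon/8$ required by Lemma \ref{lemma: group concentration} at each iteration, even though the algorithm conservatively substitutes $\epsilon/8$ inside the logarithmic factor rather than $2^{I-i}\epsilon/8$.
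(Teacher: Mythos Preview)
Your proposal is correct and follows essentially the same route as the paper: invoke Lemma~\ref{lemma: true error decreases} at $i=I$ for correctness, then bound $m_i$ via the inclusion $\hypo_i \subseteq B_g(h^*, 2\nu + 2^{I-i+1}\epsilon)$ obtained from $\rho_g(h,h^*) \le L_{\dists}(h\mid g) + L_{\dists}(h^*\mid g) \le L_{\dists}^{\max}(h) + \nu$ together with Lemma~\ref{lemma: true error decreases} at $i-1$, apply the definition of $\theta_{\dists}$, and sum over $i$ and $g$. Your per-iteration bookkeeping is actually slightly tighter than the paper's---you sum the $\nu^2/(2^{I-i}\epsilon)^2$ and complement terms geometrically, whereas the paper crudely upper-bounds each summand by its $i=I$ value and multiplies by $I$, incurring an extra $\log(1/\epsilon)$ on those terms---but both arguments establish the stated $\tilde O$ bound.
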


\begin{proof} 
Lemma \ref{lemma: true error decreases} says that the number of samples drawn at each iteration is sufficiently large that with probability $\geq 1-\delta$, for all $i \in [I]$, it holds that for all $h \in \hypo_{i+1}$, that we have $\left| L_{\dists}^{\max}(h) - L_{\dists}^{\max}(h^*) \right| \leq 2^{I - i} \epsilon$. Thus, after $I = \lceil \log(1/\epsilon) \rceil$ iterations, the output $\hat{h}$ satisfies the consistency condition.

To see the label complexity, which is the sum of the number of labels we query at each iteration, we note at iteration $i$, we label no more than
\begin{equation*}
1024 \left( \frac{m_i}{\epsilon 2^{I - i}} \right)^2 \left(2d \log\left(\frac{64}{\epsilon}\right) + \ln \left(\frac{8 G  \lceil \log(1/\epsilon) \rceil}{\delta}\right) \right) + \frac{128 \ln(4 G  \lceil \log(1/\epsilon) \rceil/\delta)}{\epsilon^2}
\end{equation*}
samples for each group distribution $D_g$, where $m_i = \max_{g'} \mathbb{P}_{D_{g'}}(x \in \Delta(\hypo_i))$. The only term here that depends on $i$ is $\frac{m_i}{\epsilon 2^{I - i}}$. By Lemma \ref{lemma: true error decreases}, with probability $\geq 1-\delta$, it holds for each $i>1$ that $\left| L_{\dist}^{\max}(h) - L_{\dist}^{\max}(h^*) \right| \leq 2^{I - i + 1} \epsilon$; this holds automatically at $i=1$ by the setting of $I= \lceil \log(1/\epsilon) \rceil$. Thus, at arbitrary $i$ and for arbitrary $g \in [G]$, we may write
\begin{align*}
\rho_{g}(h, h^*) &=  \mathbb{P}_{D_g}(h(x) \ne h^*(x)) \\
&= \mathbb{P}_{D_g}(h(x) \ne y, h^*(x) = y) + \mathbb{P}_{D_g}(h(x) = y, h^*(x) \ne y) \\
&\leq \mathbb{P}_{D_g}(h(x) \ne y) +  \mathbb{P}_{D_g}(h^*(x) \ne y) \\
&= L_{\dists}(h \mid g) + L_{\dists}(h^* \mid g) \\
&\leq L_{\dists}^{\max}(h) + L_{\dists}^{\max}(h^*) \\
&=   L_{\dists}^{\max}(h)  -  L_{\dists}^{\max}(h^*) +  L_{\dists}^{\max}(h^*) +  L_{\dist}^{\max}(h^*) \\
&\leq 2^{I - i + 1}\epsilon + 2\nu,
\end{align*}
where we recall $\nu$ is the noise rate on the multi-group objective. 
Thus, with probability $\geq 1-\delta$, for each $i \in I$ and $g \in [G]$, it holds that 
\begin{equation*}
\hypo_i \subseteq B_g(h^*,  2^{I - i + 1}\epsilon + 2\nu).
\end{equation*}
Given this observation, we may then write, for all $g$, that
\begin{equation*}
\mathbb{P}_{D_g}( x \in \Delta(\hypo_i)  ) \leq  \mathbb{P}_{D_g}( x \in \Delta(B_k(h^*, 2\nu + 2^{I - i + 1}\epsilon))), 
\end{equation*}
as if there are $h, h' \in \hypo_i$ that disagree on some $x$, we have $h, h' \in B_g(h^*, 2\nu + 2^{I - i + 1}\epsilon)$, and so $h, h'$ also realize disagreement on $x$ for the larger set of classifiers. Recalling the definition of $m_i$, this allows us to bound the sum of terms depending on $i$ for each distribution $D_g$ as
\begin{align*}
\sum_{i=1}^{I} \left( \frac{m_i}{\epsilon 2^{I - i}} \right)^2 &\leq  \sum_{i=1}^{I}  \left( \frac{\max_{g'} \mathbb{P}_{D_g}\left ( x \in \Delta(B_{g'}(h^*, 2\nu + 2^{I - i + 1}\epsilon))\right)}{ 2^{I - i}\epsilon}  \right)^2 \\ 
&\leq  \sum_{i=1}^{I}  \left( \max_{g'} \frac{\mathbb{P}_{D_g}\left (x \in \Delta(B_{g'}(h^*, 2\nu + 2^{I - i + 1}\epsilon))\right)}{  2\nu + 2^{I - i + 1}\epsilon} \cdot \frac{ 2\nu + 2^{I - i + 1}\epsilon}{2^{I  - i}\epsilon}  \right)^2 \\
&\leq 4\left(\frac{ \nu + \epsilon}{\epsilon} \right)^2  \sum_{i=1}^{I} \left( \max_{g'}  \frac{\mathbb{P}_{D_g} \left ( x \in \Delta(B_{g'}(h^*, 2\nu + 2^{I - i + 1}\epsilon))\right)}{  2\nu + 2^{I - i + 1}\epsilon} \right)^2 \\
&\leq  4\left(\frac{ \nu + \epsilon}{\epsilon} \right)^2  \sum_{i=1}^{I} \left( \max_{g'} \sup_{h \in \hypo} \sup_{r \geq 2\nu + \epsilon} \frac{\mathbb{P}_{D_g}\left ( x \in \Delta(B_{k'}(h, r))\right)}{r}   \right)^2 \\ 
&= 4 \lceil \log(1/\epsilon) \rceil \left(\frac{ \nu + \epsilon}{\epsilon} \right)^2 \left( \max_{g'}  \theta_{g'} \right)^2 \\ 
&= 4\lceil \log(1/\epsilon) \rceil  \left(\frac{ \nu + \epsilon}{\epsilon} \right)^2   \theta_{\dists}^2. \\ 
\end{align*}
The label complexity bound then follows by noting the algorithm labels the same amount of samples for all $G$ groups each iteration, and ignoring the factors of $\log(G)$ and $\log(\log(1/\epsilon))$.
\end{proof}

\subsection{Group-Realizable Guarantees}

\begin{theorem}
Suppose Algorithm \ref{alg: group realizable} is run with the active learner $\mathcal{A}_{CAL}$ of \cite{cohn1994}. Then for all $\epsilon > 0$, $\delta \in (0,1)$, hypothesis classes $\hypo$ with $d< \infty$, and collections of group distributions $\groups$ that are group realizable with respect to $\hypo$, with probability $\geq 1-\delta$, the output $\hat{h}$ satisfies
\begin{equation*}
L_{\dists}^{\max}(\hat{h}) \leq L_{\dists}^{\max}(h^*) + \epsilon, 
\end{equation*}
and the number of labels requested is 
\begin{equation*}
\tilde{O}\bigg( d G  \theta_{\dists} \log(1/\epsilon) \bigg).
\end{equation*}
\end{theorem}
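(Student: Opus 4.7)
My plan is to establish the correctness guarantee by showing that the artificially labeled samples produce empirical loss estimates close to the true group losses, and then argue label complexity by summing the known label complexity of $\mathcal{A}_{CAL}$ across the $G$ subroutine calls.

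The first step is to handle the $G$ calls to $\mathcal{A}_{CAL}$. Since each group distribution $D_g$ is individually realizable under $\hypo$ by the group-realizability assumption, the standard CAL guarantee applies at confidence parameter $\delta/2G$ and accuracy $\epsilon/6$, yielding a hypothesis $\hat{h}_g$ satisfying $L_{\dists}(\hat{h}_g \mid g) \leq \epsilon/6$ using $\tilde{O}(d \, \theta_g \log(1/\epsilon))$ labels with probability $\geq 1-\delta/2G$. A union bound over $g$ gives that all $\hat{h}_g$ are simultaneously $\epsilon/6$-optimal on their respective groups with probability $\geq 1-\delta/2$, and summing the per-group label complexities gives the claimed $\tilde{O}(d G \theta_{\dists} \log(1/\epsilon))$ bound.

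The second step is to control how well the artificially labeled empirical loss $L_{\hat{S}_g}(h) = \frac{1}{|\hat{S}_g|}\sum_{(x,\hat{y})\in\hat{S}_g}\mathbbm{1}[h(x)\ne\hat{y}]$ approximates the true group loss $L_{\dists}(h\mid g)$ uniformly over $\hypo$. I would decompose via the triangle inequality as
\begin{equation*}
\bigl| L_{\hat{S}_g}(h) - L_{\dists}(h \mid g) \bigr| \leq \bigl| L_{\hat{S}_g}(h) - \mathbb{P}_{D_g}(h(x)\ne\hat{h}_g(x)) \bigr| + \bigl| \mathbb{P}_{D_g}(h(x)\ne\hat{h}_g(x)) - \mathbb{P}_{D_g}(h(x)\ne y) \bigr|.
\end{equation*}
The first term is bounded by $\epsilon/6$ uniformly over $\hypo$ by the standard VC uniform convergence bound of Vapnik, using the sample size $|S_g'| = 144/\epsilon^2(2d\ln(24/\epsilon)+\ln(8G/\delta))$ together with a union bound over the $G$ groups with confidence $\delta/2$. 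The second term is bounded by $L_{\dists}(\hat{h}_g \mid g) \leq \epsilon/6$ by the elementary observation that $\mathbbm{1}[h(x)\ne\hat{h}_g(x)]$ and $\mathbbm{1}[h(x)\ne y]$ can differ only when $\hat{h}_g(x)\ne y$. Combining, $\bigl|L_{\hat{S}_g}(h) - L_{\dists}(h\mid g)\bigr| \leq \epsilon/3$ simultaneously for all $h$ and all $g$, and taking the max over $g$ preserves this bound for the multi-group objective.

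The final step is a standard ERM argument. Conditioning on both high-probability events (total failure probability $\leq \delta$), since $\hat{h}$ minimizes $\max_g L_{\hat{S}_g}(\cdot)$ over $\hypo$, we get
\begin{equation*}
L_{\dists}^{\max}(\hat{h}) \leq \max_g L_{\hat{S}_g}(\hat{h}) + \epsilon/3 \leq \max_g L_{\hat{S}_g}(h^*) + \epsilon/3 \leq L_{\dists}^{\max}(h^*) + 2\epsilon/3 \leq L_{\dists}^{\max}(h^*) + \epsilon,
\end{equation*}
which completes the correctness proof. The main subtlety to handle carefully is the second term in the triangle inequality: this is where group realizability is essential, because it allows us to drive $L_{\dists}(\hat{h}_g \mid g)$ to $\epsilon/6$ using a realizable subroutine. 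In the full agnostic case, this term would instead be bounded only by $\nu_g$, which is precisely why the same reduction gives only the $3\nu$-approximation of Theorem \ref{thm: approx} rather than an $\epsilon$-optimal guarantee.
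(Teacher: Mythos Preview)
Your proposal is correct and follows essentially the same approach as the paper: invoke the CAL guarantee per group with a union bound, use VC uniform convergence on the ERM samples, bound the discrepancy between artificial and true labels by the error of $\hat{h}_g$, and finish with the standard ERM inequality chain. The one minor difference is your triangle-inequality decomposition: the paper routes through a \emph{counterfactual empirical} set $S_g$ (same unlabeled points $S_g'$, but labeled via the oracle $O_g$) and bounds $|L_{S_g}(h) - L_{\hat{S}_g}(h)| \leq L_{S_g}(\hat{h}_g)$, whereas you route through the \emph{population} quantity $\mathbb{P}_{D_g}(h(x) \ne \hat{h}_g(x))$ and bound the population discrepancy directly by $L_{\dists}(\hat{h}_g \mid g)$. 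Both are valid; your version avoids introducing the counterfactual set and even yields a slightly tighter constant ($2\epsilon/3$ slack versus the paper's $\epsilon$), while the paper's version makes the coupling between $\hat{S}_g$ and a ``real'' labeled sample more explicit, which is the device they reuse verbatim in the proof of Theorem~\ref{thm: approx}.
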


\begin{proof}
The label complexity follows directly from the guarantees given in \cite{dasgupta2011}. By a union bound, we with probability $\geq 1-\delta$, have that for all $g \in [G]$, that $\mathcal{A}_{CAL}$ returns $\hat{h}_g$ with the property that 
\begin{equation*}
L_{\dists}(\hat{h}_g \mid g) \leq \epsilon/6.
\end{equation*}
Fix some $g \in [G]$ arbitrarily. Consider a counterfactual training set $S_g$, unseen by the learner, constructed by labeling each example $x \in S'_g$ via the oracle call $O_g(x)$.
Then Vapnik \cite{vapnik1998} tells us that $m_g := |S'_g|$ is sufficiently large that with probability $\geq 1-\delta/2$, for each $h \in \hypo$ simultaneously, we have 
\begin{equation*}
\left | L_{\dists}(h \mid g) - L_{S_g}(h) \right| < \epsilon/6.
\end{equation*}
Again by the union bound, this uniform convergence on $S_g$ and the guarantee on the runs of $\mathcal{A}_{CAL}$ both hold for each $g \in [G]$. Conditioned on this high probability event, we can first note that for some arbitrary $h \in \hypo$, 
\begin{align*}
\left| L_{S_g}(h) - L_{\hat{S}_g}(h)   \right| &= \left| \frac{1}{m_g} \sum_{i=1}^{m_g} \mathbbm{1}[h(x_i) \ne y_i ] - \mathbbm{1}[h(x_i) \ne \hat{h}_g(x_i) ]  \right| \\
&\leq  \frac{1}{m_g} \sum_{i=1}^{m_g}  \left| \mathbbm{1}[h(x_i) \ne y_i ] - \mathbbm{1}[h(x_i) \ne \hat{h}_g(x_i) ]  \right| \\
&=  \frac{1}{m_g} \sum_{i=1}^{m_g}  \mathbbm{1}[y_i \ne \hat{h}_g(x_i) ] \\ 
&= L_{S_g}(\hat{h}_g) \\
&\leq L_{\dists}(\hat{h}_g) + \epsilon/6 \\
&\leq \epsilon/6 + \epsilon/6 \\
&=\epsilon/3, 
\end{align*}
where the final equality comes from the success of the runs of $\mathcal{A}_{CAL}$. Then for arbitrary $h$, combining Vapnik's guarantee and the inequality we just showed, we may write: 
\begin{align*}
\left| L_{\dists}(h \mid g) - L_{\hat{S}_g}(h)  \right| &= \left| L_{\dists}(h \mid g) - L_{S_g}(h)  + L_{S_g}(h) - L_{\hat{S}_g}(h)  \right| \\
&\leq \left| L_{\dists}(h \mid g) - L_{S_g}(h) \right|  + \left| L_{S_g}(h) - L_{\hat{S}_g}(h)  \right| \\
&< \epsilon/6 + \epsilon/3 \\
&= \epsilon/2.
\end{align*}
Given this guarantee on the representativeness of the artificially labeled samples on each group $g$, we have a guarantee for the representativeness over the worst case. For arbitrarily $h \in \hypo$, we may write
\begin{align*}
\left| L_{\dists}^{\max}(h) - \max_{g \in [G]} L_{\hat{S}_g}(h) \right| &= \left| \max_{g \in [G]} L_{\dists}(h \mid g) - \max_{g \in [G]} L_{\hat{S}_g}(h) \right| \\
&\leq  \max_{g \in [G]} \left|  L_{\dists}(h \mid g) - L_{\hat{S}_g}(h)\right| \\
&\leq \epsilon/2.
\end{align*}
Thus, by the fact that $\hat{h}$ is the ERM, we have 
\begin{equation*}
L_{\dists}^{\max}(\hat{h}) \leq \max_{g \in [G]} L_{\hat{S}_g}(\hat{h}) + \epsilon/2 \leq \max_{g \in [G]} L_{\hat{S}_g}(h^*) + \epsilon/2 \leq L_{\dists}^{\max}(h^*) + \epsilon. 
\end{equation*}
\end{proof}

\subsection{Approximation Guarantees}
\begin{theorem}
Suppose Algorithm 3 is run with the active learner $\mathcal{A}_{DHM}$ of \cite{dasgupta2011}. Then for all $\epsilon > 0$, $\delta \in (0,1)$, hypothesis classes $\hypo$ with $d< \infty$, and collections of groups $\mathcal{D}$, with probability $\geq 1-\delta$, the output $\hat{h}$ satisfies
\begin{equation*}
L_{\dists}^{\max}(\hat{h}) \leq L_{\dists}^{\max}(h^*) + 2 \cdot \max_{g \in [G]} \nu_g + \epsilon \leq  3 \cdot L_{\dists}^{\max}(h^*) + \epsilon, 
\end{equation*}
and the number of labels requested is 
\begin{equation*}
\tilde{O}\Bigg( d G \theta_{\dists} \bigg(  \log^2(1/\epsilon) + \frac{ \nu^2}{\epsilon^2} \bigg) \Bigg).
\end{equation*}
\end{theorem}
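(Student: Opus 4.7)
The plan is to mirror the proof of Theorem 2 closely, tracking how the approximation error introduced by each $\hat{h}_g$ having nonzero group noise rate $\nu_g$ propagates through the final ERM step. The label complexity follows almost immediately from plugging the guarantee of $\mathcal{A}_{DHM}$ from \cite{dasgupta2011} into each of the $G$ calls: each call contributes $\tilde{O}(d\theta_g(\log^2(1/\epsilon)+\nu_g^2/\epsilon^2))$ label queries to achieve confidence $\delta/2G$ and error $\epsilon/6$, and summing these over $g$ along with the fact that $\theta_g \leq \theta_{\dists}$ and $\nu_g \leq \nu$ yields the claimed $\tilde{O}(dG\theta_{\dists}(\log^2(1/\epsilon)+\nu^2/\epsilon^2))$ bound.

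For the correctness claim, I would proceed in three steps. First, condition on the high probability event that simultaneously (i) each $\hat{h}_g$ satisfies $L_{\dists}(\hat{h}_g \mid g) \leq \nu_g + \epsilon/6$ by the DHM guarantee, and (ii) the unlabeled sample $S'_g$ is large enough that on a counterfactual, truly labeled version $S_g$ of it, Vapnik's bound yields $|L_{\dists}(h \mid g) - L_{S_g}(h)| \leq \epsilon/6$ uniformly over $h \in \hypo$; union bound over $2G$ events to pay total confidence $\delta$.

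Second, I would bound the distortion between $L_{S_g}(h)$ and $L_{\hat S_g}(h)$ exactly as in Theorem 2, obtaining
\begin{equation*}
|L_{S_g}(h) - L_{\hat S_g}(h)| \leq \frac{1}{|S_g|}\sum_{i} \mathbbm{1}[y_i \ne \hat{h}_g(x_i)] = L_{S_g}(\hat{h}_g).
\end{equation*}
The one new observation relative to the realizable case is that $L_{S_g}(\hat h_g)$ is no longer near zero, but by uniform convergence and the DHM guarantee it is at most $L_{\dists}(\hat h_g \mid g) + \epsilon/6 \leq \nu_g + \epsilon/3$. Combining with Vapnik's bound gives
\begin{equation*}
|L_{\dists}(h \mid g) - L_{\hat S_g}(h)| \leq \nu_g + \epsilon/2
\end{equation*}
uniformly in $h$, and taking a max over $g$ converts this into a uniform bound of $\max_g \nu_g + \epsilon/2$ on $|L_{\dists}^{\max}(h) - \max_g L_{\hat S_g}(h)|$ (using $|\max_g a_g - \max_g b_g| \leq \max_g |a_g - b_g|$).

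Third, applying this uniform bound to both $\hat h$ and $h^*$ together with the ERM property of $\hat h$ on the artificial losses yields
\begin{equation*}
L_{\dists}^{\max}(\hat h) \leq \max_g L_{\hat S_g}(\hat h) + \max_g \nu_g + \epsilon/2 \leq \max_g L_{\hat S_g}(h^*) + \max_g \nu_g + \epsilon/2 \leq L_{\dists}^{\max}(h^*) + 2\max_g \nu_g + \epsilon,
\end{equation*}
which is the first inequality of the theorem. The second inequality is immediate from $\nu_g = \inf_h L_{\dists}(h \mid g) \leq L_{\dists}(h^* \mid g) \leq L_{\dists}^{\max}(h^*)$, so $\max_g \nu_g \leq L_{\dists}^{\max}(h^*)$. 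The only real subtlety, and the one step worth flagging as nontrivial, is the observation that the appropriate replacement for the ``$L_{S_g}(\hat h_g) \leq \epsilon/6 + \epsilon/6$'' line of the group-realizable proof is $L_{S_g}(\hat h_g) \leq \nu_g + \epsilon/3$, and that this extra $\nu_g$ term is what forces the additive $2\max_g \nu_g$ slack in the final guarantee.
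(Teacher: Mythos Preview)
Your proposal is correct and follows essentially the same approach as the paper's proof: both condition on the DHM guarantee plus Vapnik uniform convergence on a counterfactual labeled sample, bound $|L_{S_g}(h)-L_{\hat S_g}(h)|$ by $L_{S_g}(\hat h_g)\le \nu_g+\epsilon/3$, combine to get $|L_{\dists}(h\mid g)-L_{\hat S_g}(h)|\le \nu_g+\epsilon/2$, and finish with the ERM sandwich. If anything, your handling is slightly tidier in that you carry $\max_g \nu_g$ explicitly through to the first inequality of the theorem statement and give a clean one-line justification ($\nu_g \le L_{\dists}(h^*\mid g)\le L_{\dists}^{\max}(h^*)$) for the second.
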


\begin{proof}
The proof is almost identical to that of Theorem \ref{thm: group realizable}. The label complexity bound follows directly from \cite{dasgupta2007}.  Similar to before, 
 we have that for all $g \in [G]$, $\mathcal{A}_{DHM}$ returns $\hat{h}_g$ with the property that 
\begin{equation*}
L_{\dists}(\hat{h}_g \mid g) \leq L_{\dists}(h^*_g \mid g)+  \epsilon/6.
\end{equation*}
Fix some $g \in [G]$ arbitrarily. On a counterfactual training set $S_g$, unseen by the learner, constructed by labeling each example $x \in S'_g$ via the oracle call $O_g(x)$, it holds that
 $m_g := |S'_g|$ is sufficiently large that with probability $\geq 1-\delta/2$, for each $h \in \hypo$ simultaneously, we have 
\begin{equation*}
\left | L_{\dists}(h \mid g) - L_{S_g}(h) \right| < \epsilon/6.
\end{equation*}
By the union bound, this uniform convergence and the guarantee on the runs of $\mathcal{A}_{DHM}$ both hold. Thus, we can first note that for some arbitrary $h \in \hypo$, 
\begin{align*}
\left| L_{S_g}(h) - L_{\hat{S}_g}(h)   \right| &= \left| \frac{1}{m_g} \sum_{i=1}^{m_g} \mathbbm{1}[h(x_i) \ne y_i ] - \mathbbm{1}[h(x_i) \ne \hat{h}_g(x_i) ]  \right| \\
&\leq  \frac{1}{m_g} \sum_{i=1}^{m_g}  \left| \mathbbm{1}[h(x_i) \ne y_i ] - \mathbbm{1}[h(x_i) \ne \hat{h}_g(x_i) ]  \right| \\
&= \frac{1}{m_g} \sum_{i=1}^{m_g}  \mathbbm{1}[y_i \ne \hat{h}_g(x_i) ] \\ 
&= L_{S_g}(\hat{h}_g) \\
&\leq L_{\dists}(\hat{h}_g \mid g) + \epsilon/6 \\
&\leq L_{\dists}(h^*_g \mid g) + \epsilon/3 \\
&=\nu_g+ \epsilon/3. \\
\end{align*}
where the second to last inequality comes from uniform convergence over $S_G$, and the final equality comes from the correctness guarantee of $\mathcal{A}_{DHM}$. Then for arbitrary $h$, combining Vapnik's guarantee and the inequality we just showed, we may write: 
\begin{align*}
\left| L_{\dists}(h \mid g) - L_{\hat{S}_g}(h)  \right| &= \left| L_{\dists}(h \mid g) - L_{S_g}(h)  + L_{S_g}(h) - L_{\hat{S}_g}(h)  \right| \\
&\leq \left| L_{\dists}(h \mid g) - L_{S_g}(h) \right|  + \left| L_{S_g}(h) - L_{\hat{S}_g}(h)  \right| \\
&< \epsilon/6 + \nu_g  + \epsilon/3 \\
&= \nu_g + \epsilon/2.
\end{align*}
Then, as above, we have, for arbitrarily $h \in \hypo$, 
\begin{align*}
\left| L_{\dists}^{\max}(h) - \max_{g \in [G]} L_{\hat{S}_g}(h) \right| &\leq  \max_{g \in [G]} \left|  L_{\dists}(h \mid g) - L_{\hat{S}_g}(h)\right| \leq \max_{g \in [G]} \nu_g + \epsilon/2 \leq \nu + \epsilon/2, 
\end{align*}
where the the final inequality comes from the fact that if any hypothesis has less than $\nu_g$ error on all groups, it would be optimal on group $g$. Thus, by the fact that $\hat{h}$ is the ERM, we have 
\begin{equation*}
L_{\dists}^{\max}(\hat{h}) \leq \max_{g \in [G]} L_{\hat{S}_g}(\hat{h}) + \nu_g + \epsilon/2 \leq \max_{g \in [G]} L_{\hat{S}_g}(h^*) + \nu_g + \epsilon/2 \leq L_{\dists}^{\max}(h^*) + 2\nu + \epsilon \leq 3 \cdot L_{\dists}^{\max}(h^*) + \epsilon
\end{equation*}
\end{proof}

\end{document}